\documentclass[11pt]{article}
\usepackage[margin=1in]{geometry}

\PassOptionsToPackage{numbers,sort&compress}{natbib}

\usepackage[utf8]{inputenc}   
\usepackage[T1]{fontenc}      
\usepackage{microtype}        

\usepackage{amsmath, amssymb, amsthm, amsfonts}
\usepackage{bbm}
\usepackage{nicefrac}         

\usepackage{booktabs}         
\usepackage{tabularx}
\usepackage{multirow}
\usepackage{multicol}

\usepackage{graphicx}
\usepackage{subcaption}
\usepackage{caption}
\usepackage{tikz}
\usepackage{wrapfig}
\usepackage{xcolor}

\usepackage{algorithm}
\usepackage{algorithmic}      

\usepackage{tcolorbox}
\tcbuselibrary{skins,breakable}
\usepackage{framed}
\usepackage{fancyvrb}

\usepackage[inline]{enumitem}

\usepackage{natbib}

\usepackage{hyperref}
\hypersetup{
    colorlinks=true,
    linkcolor=blue,
    citecolor=blue,
    urlcolor=blue
}
\usepackage{url}

\usepackage{etoc}
\usepackage{etoolbox}
\usepackage{titletoc}

\usepackage{authblk}

\makeatletter
  \newif\ifinappendix
  \inappendixfalse
  \let\origaddcontentsline\addcontentsline
  \renewcommand{\addcontentsline}[3]{%
    \ifinappendix
      \origaddcontentsline{#1}{#2}{#3}%
    \fi
  }
  \pretocmd{\appendix}{\inappendixtrue}{}{}
\makeatother

\newtheorem{theorem}{Theorem}[section]
\newtheorem{lemma}[theorem]{Lemma}

\theoremstyle{definition}

\theoremstyle{remark}

\tcbset{
  colback=gray!5,
  colframe=black!75!black,
  boxrule=0.5mm,
  arc=4mm
}

\newtcolorbox{promptbox}[1]{
    colback=gray!5,
    colframe=gray!70!black,
    coltitle=white,
    fonttitle=\bfseries\sffamily,
    rounded corners,
    arc=3pt,
    title=#1,
    left=10pt, right=10pt, top=10pt, bottom=10pt,
    fontupper=\ttfamily\small
}

\newcommand{\SN}[1]{\noindent{\textcolor{purple}{\textbf{SN:}}}}

\title{Disagreement-Guided Weak-to-Strong Routing}

\author[1]{Sima Noorani$^*$}
\author[1]{Shayan Kiyani$^*$}
\author[1]{George Pappas}
\author[1]{Hamed Hassani}
\affil[1]{University of Pennsylvania}

\date{}
\begin{document}

\maketitle

{\renewcommand{\thefootnote}{*}
\footnotetext{Equal contribution. Correspondence to: \texttt{\{nooranis, shayank\}@seas.upenn.edu}}}

\begin{abstract}
Small language models are increasingly deployed on edge devices and robotic systems to meet cost and latency constraints, yet their limited capabilities can make reliable deployment challenging. Stronger models can provide assistance when needed, but deciding when to route a weak-model response to a stronger model often relies on signals of response correctness. In open-ended and agentic tasks, however, obtaining such signals can require costly ground-truth supervision that is difficult to obtain at scale. We revisit the foundations of weak-to-strong routing and ask whether routing decisions can instead be guided by signals that do not require ground-truth supervision. We first characterize the optimal weak-to-strong routing policy, while allowing for multiple weak-model responses generated through local inference. Building on this characterization, we derive a principled approximation to the optimal routing based on model-to-model comparison. In particular, we introduce a notion of model disagreement that can be measured directly from model-generated responses, without ground-truth supervision.
Importantly, the approximation becomes increasingly accurate as the strong model becomes a better proxy for the ground-truth distribution. Across agentic and single-turn tasks, our approach effectively navigates the trade-off between routing cost and response accuracy without ground-truth supervision. Moreover, even when ground-truth correctness signals are available for training, disagreement-based routing can outperform methods trained directly to predict correctness.
\end{abstract}

\section{Introduction}
Small language models offer a practical path to deploying AI agents on edge devices, robots, and other systems constrained by cost, latency, connectivity, or privacy. Yet their limited capabilities make it difficult to rely on them across the full range of tasks they may encounter \citep{lu-etal-2025-demystifying,chen2024theoreticallimitationsmultilayertransformer}. This has motivated growing interest in systems that primarily rely on a weak LLM, but occasionally route difficult instances to a stronger model when additional capability is needed \citep{gupta2024languagemodelcascadestokenlevel,yue2024largelanguagemodelcascades,chen2023frugalgptuselargelanguage}.

The possibility of such routing changes how we should think about inference with weak LLMs. Rather than simply generating a response and returning it, inference now involves three decisions: whether to \emph{accept} the current response, \emph{spend additional local computation} on further reasoning or sampling, or \emph{seek assistance} from a stronger model at substantially higher cost. These choices introduce a new trade-off: \emph{Stopping local inference too early} miss improvements achievable through inexpensive additional computation; \emph{persisting locally for too long} waste time and resources when stronger assistance is ultimately needed; and \emph{routing too readily} undermines the cost and latency advantages of the weak model.

A natural way to navigate this trade-off is to assess response correctness through learned confidence scores, verifiers, or reward models \citep{lightman2023letsverifystepstep, wang2024mathshepherdverifyreinforcellms}. In practical agentic applications, however, correctness is often difficult to evaluate. For a robotic agent, assessing a proposed next action may require knowledge of the physical environment and consequences that become apparent after execution. For a computer-use agent, whether an action is appropriate may depend on the state of an external application, user preferences, or the downstream effects of modifying a file or sending a message. More broadly, in long-horizon agentic workflows, a locally plausible action may undermine the eventual outcome, violate a user constraint, or introduce an error that becomes apparent several steps later \citep{shinn2023reflexionlanguageagentsverbal, fan2026agentprocessbenchdiagnosingsteplevelprocess, lù2025agentrewardbenchevaluatingautomaticevaluations}. Obtaining correctness labels in such settings may require subjective or expert judgment, interaction with the environment, or evaluation of entire trajectories. Such supervision is costly and does not readily scale across tasks and deployment conditions. These challenges motivate our central question:

\begin{center}
\emph{Can we learn when to rely on weak-model inference and when to seek stronger assistance—without ground-truth correctness labels?}
\end{center}

To answer this question, we take a step back and shift the perspective from asking when the weak model needs help to asking about the \emph{value of using a stronger model}. From this perspective, routing is fundamentally a comparative decision: what matters is not simply whether the weak model's response is incorrect, but whether using the strong model instead is likely to improve the outcome. If we had access to the true likelihood that each response is correct, the value of using the strong model instead would naturally be captured by the difference
\begin{equation}
    \text{strong-model correctness likelihood}
    -
    \text{weak-response correctness likelihood}.
    \label{eq:informal-routing-gain}
\end{equation}
This difference represents the \emph{gain} in correctness likelihood from going from the current weak-model response to a strong-model response. In Section\ref{sec:routing_fundamentals}, we formalize a generalized version of ~\eqref{eq:informal-routing-gain} and characterize optimal routing within a broader framework that jointly accounts for weak-model test-time inference allocation.

Both correctness quantities in (\ref{eq:informal-routing-gain}), however, are difficult to access without ground-truth supervision. This motivates us to ask how (\ref{eq:informal-routing-gain}) can be approximated using only quantities available from the models themselves. A key observation is that (\ref{eq:informal-routing-gain}) depends on the \emph{difference} between the two correctness likelihoods, rather than requiring either likelihood individually. This naturally shifts our attention from model-to-ground-truth comparison to \emph{model-to-model comparison}, and in particular to how much the models \emph{disagree}, which can be measured directly from model-generated responses.

In Section~\ref{sec:approximation}, we derive a principled, ground-truth-free approximation to~(\ref{eq:informal-routing-gain}) using only model-generated responses. At a high level, the resulting score takes a simple form:
\begin{equation}
    \underbrace{\text{weak--strong disagreement}}_{\text{difference across models}}
    -
    \underbrace{\text{strong--strong disagreement}}_{\text{uncertainty within the strong model}}.
    \label{eq:informal-corrected-disagreement}
\end{equation}
The first term asks whether the strong model is likely to provide something different from the weak response. The second then accounts for how uncertain the strong model itself is, since a different response is less likely to represent a useful correction when the strong model is itself uncertain. This approximation is principled in a natural sense: it becomes increasingly accurate as the strong model's predictive distribution approaches the ground-truth distribution, and is exact when the two coincide. To this end, our contributions are summarized as follows:

Our contributions are summarized as follows:
\begin{itemize}[leftmargin=*, labelsep=0.5em]
    \item \textbf{Fundamentals of weak-to-strong routing.}
    In Section~\ref{sec:routing_fundamentals}, we characterize optimal routing within a broader framework that jointly accounts for weak-model test-time inference allocation, identifying the population quantities governing when to continue locally and when to seek stronger assistance.

    \item \textbf{Routing through model disagreement.}
    In Section~\ref{sec:approximation}, we derive a novel, ground-truth-free approximation to the optimal routing signal based on two disagreement signals: how much the weak and strong models disagree, and how much the strong model disagrees with itself. Importantly, (i) the approximation gap shrinks as the strong model becomes more accurate, and (ii) both signals can be \emph{efficiently learned from scalable model-generated data, without ground-truth supervision}.

    \item \textbf{Empirical evaluation in agentic and reasoning tasks.}
    In Section~\ref{sec:experiments}, we evaluate disagreement-based routing across \emph{multi-step agentic environments, where step-level correctness is difficult to define or supervise}, and controlled single-turn reasoning tasks \footnote{We release our code at \href{https://github.com/nooranisima/disagreement-weak-to-strong-routing}{https://github.com/nooranisima/disagreement-weak-to-strong-routing}}. Across both regimes, our learned disagreement-based signals consistently outperform label-free baselines, validating that disagreement-based approximation provides a reliable, scalable learning signal when ground-truth supervision is unavailable. 
\end{itemize}

\section{Related Work}
\label{sec:related_work}

We briefly discuss the works most closely related to our evaluation setting and baselines, and defer a discussion to Appendix~\ref{app:extended_related_works}.
\paragraph{Learning to defer and supervised routing.}
The fundamental problem of knowing when a model should abstain or seek help has been extensively studied under the umbrella of learning-to-defer and selective prediction \citep{mozannar2021consistentestimatorslearningdefer, jitkrittum2024doesconfidencebasedcascadedeferral}. With the advent of large language models, this challenge translates into determining when a weak model should autonomously escalate to a stronger one. In parallel to frameworks that study deferral for verification \citep{kiyani2026trustcheapcheckweak}, our focus is on routing model generations. Many current cascade methods rely on ground-truth correctness labels to train quality scorers, process verifiers, or token-level confidence predictors \citep{chen2023frugalgptuselargelanguage, gupta2024languagemodelcascadestokenlevel, aggarwal2025automixautomaticallymixinglanguage, chuang2025learningroutellmsconfidence, rabanser2025gatekeeperimprovingmodelcascades}. In our experiments, we represent this paradigm using a ground-truth supervised baseline highlighting that our proposed framework fundamentally eliminates the need for such expensive, task-specific supervision.

\textbf{Label-free routing signals}. To avoid human annotations, recent work extracts reliability signals directly from a single unlabeled generation using internal uncertainty metrics or hidden-state probes \citep{gupta2024languagemodelcascadestokenlevel, kossen2024semanticentropyprobesrobust, du2024haloscopeharnessingunlabeledllm}. A prevalent alternative assesses reliability by generating multiple responses to measure internal agreement or semantic overlap \citep{wang2023selfconsistencyimproveschainthought, yue2024largelanguagemodelcascades, kuhn2023semanticuncertaintylinguisticinvariances, Farquhar-Semantic-Entropy}. These two paradigms motivate our label-free baselines in both settings to demonstrate that our single-pass, disagreement-trained probe yields a better routing signal than standard internal uncertainty metrics.

\paragraph{Step-level routing in agentic tasks.} 
In sequential environments, determining when to escalate an action to a stronger model is typically guided by process-level verifiers and trajectory-level rewards \citep{kapoor2026trimhybridinferencetargeted, hemadri2026r2vagentteachingslms, sarukkai2026inferencetimedistillationcostefficientagents}, or frameworks for cost-aware decision support policies \citep{kiyani2026strategicdecisionsupportai}. Because these approaches rely on explicit step-level correctness labels or complex episode-success evaluations, they are difficult to scale in open-ended domains. We contrast these paradigms with our agentic experiments, demonstrating that our framework can route actions autonomously at each step by predicting offline action disagreement, entirely bypassing the need for process verifiers or trajectory rewards.

\section{Fundamentals of Weak-to-Strong Routing}
\label{sec:routing_fundamentals}


In this section, we study the problem of weak-to-strong routing jointly with the allocation of local inference computation.\footnote{For simplicity, we model local test-time computation through repeated weak-model sampling (e.g., best-of-$n$), although more general forms of test-time inference are possible.} We first study this problem at the population level, assuming access to the underlying probabilistic quantities, which allows us to characterize the optimal solution and identify the fundamental principles governing sampling, selection, and routing. In the next sections, we use these principles to develop practical methods that do not require access to such population~quantities.

\paragraph{Setup and sampling.}
Let $(P,Y,Y_w,Y_s)$ follow a joint distribution, where $P\in\mathcal P$
is a prompt, $Y\in\mathcal Y$ its correct answer, and
$Y_w,Y_s\in\mathcal Y$ are the weak- and strong-model responses,
respectively.\footnote{For simplicity, we treat $\mathcal Y$ as a discrete
response space, so equality is well defined. For LLM outputs, this can be
understood as equality up to semantic equivalence.}

A sampling policy $\sigma:\mathcal P\to\mathbb N_+$ specifies the number of
weak-model responses generated for each prompt. Given $P$, let
$Y_{w,1},\ldots,Y_{w,\sigma(P)}$ denote these responses, and define the
resulting history as
\[
H_\sigma=(P,Y_{w,1},\ldots,Y_{w,\sigma(P)}).
\]

\paragraph{Selection and routing.}
Given the sampled weak responses, we define two policies. The routing policy
decides whether to settle for one of the weak-model responses or query the
strong model, while the selection policy determines which weak-model response
to return if we settle for the weak model. Formally,
\[
\pi(H_\sigma)\in[0,1],
\qquad
\phi(H_\sigma)\in\{1,\ldots,\sigma(P)\},
\]
where $\pi(H_\sigma)$ is the probability of querying the strong model and
$\phi(H_\sigma)$ is the index of the selected weak response.

Together, $(\sigma,\phi,\pi)$ determine how much local inference to use,
which weak response to return, and when to seek strong-model assistance.
Our goal is to maximize the accuracy of the final response while controlling
the use of both weak- and strong-model inference. We impose separate budgets:
$B_w$ on the expected number of weak-model samples and $B_s$ on the expected
frequency of strong-model queries. This gives the following optimization
problem:
\begin{tcolorbox}[
    title={Budgeted Weak-to-Strong Routing},
    colback=blue!2!white,
    colframe=blue!45!black,
    colbacktitle=blue!8!white,
    coltitle=black,
    fonttitle=\bfseries,
    boxrule=0.6pt,
    arc=1.5mm,
    left=8pt,
    right=8pt,
    top=2pt,
    bottom=2pt
]
\begin{equation}
\label{eq:routing_problem}
\begin{aligned}
\max_{\sigma,\phi,\pi}\quad&
\mathbb{E}\!\left[
  (1-\pi(H_\sigma))
  \mathbf{1}\{Y_{w,\phi(H_\sigma)}=Y\}
  +\pi(H_\sigma)\mathbf{1}\{Y_s=Y\}
\right]\\[4pt]
\text{subject to}\quad&
\mathbb{E}[\sigma(P)]\leq B_w,
\qquad
\mathbb{E}[\pi(H_\sigma)]\leq B_s.
\end{aligned}
\end{equation}
\end{tcolorbox}

The expectations are over the prompt--answer distribution and model
randomness. The two constraints allow weak-model inference and strong-model
queries to be allocated across prompts according to their respective benefits.

\paragraph{Population quantities.}
To characterize the optimal policy, define
\begin{equation}
\label{eq:population_quantities}
q_p(y):=\Pr(Y=y\mid P=p),
\qquad
t_p:=\Pr(Y_s=Y\mid P=p).
\end{equation}
Here, $y$ is a fixed candidate response, $Y$ is the random ground-truth
answer, and $Y_s$ is a response drawn from the strong model. Thus,
$q_p(y)$ is the probability that $y$ is correct, while $t_p$ is the
probability that the strong-model response is correct.



\begin{theorem}
\label{thm:optimal_routing}
Fix any sampling policy $\sigma$ satisfying
$\mathbb{E}[\sigma(P)]\leq B_w$. For a realized history
$h_\sigma=(p,y_{w,1},\ldots,y_{w,\sigma(p)})$, define
\begin{equation}
\label{eq:routing_gain}
\Delta(h_\sigma)
:=
t_p-\max_{i\leq \sigma(p)} q_p(y_{w,i}).
\end{equation}
In words, $\Delta(h_\sigma)$ is the gain in correctness probability from
querying the strong model rather than returning the best available weak
response. The optimal selection and routing policies are
\begin{equation}
\label{eq:optimal_selection_routing}
\phi^*(h_\sigma)
\in
\operatorname*{arg\,max}_{i\leq \sigma(p)} q_p(y_{w,i}),
\qquad
\pi^*(h_\sigma)
=
\begin{cases}
1, & \Delta(h_\sigma)>\tau,\\
0, & \Delta(h_\sigma)<\tau,
\end{cases}
\end{equation}
where $\tau\geq0$ is a global threshold chosen to satisfy the strong-model
budget.\footnote{A more detailed version of the theorem in
Appendix~\ref{app:optimal_routing} also characterizes the optimal sampling
policy $\sigma$.}
\end{theorem}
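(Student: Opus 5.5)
The plan is to reduce the problem, for fixed $\sigma$, to a pointwise optimization over histories by conditioning on $H_\sigma$, and then handle the strong-model budget with a Lagrangian (Neyman--Pearson type) argument.

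\textbf{Step 1: conditional reduction.} We assume, as the model implicitly does, that given $P=p$ the ground truth $Y$ is independent of the weak samples $Y_{w,1},\ldots,Y_{w,\sigma(p)}$ (the weak model sees only the prompt), and that $\sigma,\phi,\pi$ are measurable functions of the history. By the tower property the objective in \eqref{eq:routing_problem} becomes
\[
\mathbb{E}\Big[(1-\pi(H_\sigma))\,\Pr(Y=Y_{w,\phi(H_\sigma)}\mid H_\sigma)+\pi(H_\sigma)\,\Pr(Y_s=Y\mid H_\sigma)\Big].
\]
Using the conditional independence, $\Pr(Y=Y_{w,i}\mid H_\sigma=h_\sigma)=q_p(y_{w,i})$. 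Likewise $\Pr(Y_s=Y\mid H_\sigma=h_\sigma)=t_p$, since the strong sample and $Y$ are independent of the weak samples given $P$. The objective is therefore
\[
\mathbb{E}\big[(1-\pi)\,q_P(y_{w,\phi})+\pi\, t_P\big].
\]

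\textbf{Step 2: optimal selection.} Since $1-\pi\ge 0$ pointwise and $\phi$ does not enter either constraint, for any $\pi$ the objective is maximized pointwise by $\phi^*(h_\sigma)\in\arg\max_i q_p(y_{w,i})$. Substituting, the objective becomes
\[
\mathbb{E}\big[\max_i q_P(y_{w,i})\big]+\mathbb{E}\big[\pi(H_\sigma)\,\Delta(H_\sigma)\big].
\]
The first term does not depend on $\pi$.

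\textbf{Step 3: optimal routing under the budget.} It remains to maximize $\mathbb{E}[\pi\Delta]$ over $\pi\in[0,1]$ subject to $\mathbb{E}[\pi]\le B_s$. This is a fractional-knapsack / Neyman--Pearson problem. If $\Pr(\Delta>0)\le B_s$, take $\tau=0$. Otherwise choose
\[
\tau=\inf\{c\ge 0:\Pr(\Delta>c)\le B_s\},
\]
and randomize on $\{\Delta=\tau\}$ so that the budget is met with equality. For any feasible $\pi$ we compare against $\pi^*$:
\[
\mathbb{E}[(\pi^*-\pi)\Delta]=\mathbb{E}[(\pi^*-\pi)(\Delta-\tau)]+\tau\,\mathbb{E}[\pi^*-\pi]\ge 0 .
\]
The first term is nonnegative pointwise: where $\Delta>\tau$ we have $\pi^*=1\ge\pi$, and where $\Delta<\tau$ we have $\pi^*=0\le\pi$. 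The second term is nonnegative because either $\tau=0$, or $\tau>0$ and $\mathbb{E}[\pi^*]=B_s\ge\mathbb{E}[\pi]$.

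Nonnegativity of $\tau$ is justified because routing when $\Delta<0$ strictly lowers the objective. The statement leaves $\pi^*$ unspecified at ties, so any tie-breaking on $\{\Delta=\tau\}$ is allowed; this is the only place randomization is needed.

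The main obstacle is \textbf{Step 1}. We need to carefully justify that the conditional correctness probabilities reduce to $q_p$ and $t_p$, which requires stating the conditional independence assumptions on the joint law of $(P,Y,Y_{w,i},Y_s)$. The rest is a standard Lagrangian threshold argument.
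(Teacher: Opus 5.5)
Your proposal is correct and follows essentially the same route as the paper's proof. Both condition on $H_\sigma$ using the conditional independence of the weak samples from $(Y,Y_s)$ given $P$, select the weak response pointwise, and then apply the identity $\mathbb{E}[(\pi^*-\pi)\Delta]=\mathbb{E}[(\pi^*-\pi)(\Delta-\tau)]+\tau\,\mathbb{E}[\pi^*-\pi]\ge 0$ together with complementary slackness; your explicit quantile choice of $\tau$ only makes concrete what the paper asserts can be chosen, and the one caveat is that tie values are not arbitrary when $\tau>0$ and $\Pr(\Delta=\tau)>0$, since they must be the budget-exhausting randomization you specified or the second term can become negative.
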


\paragraph{Interpreting the optimal policies.}
From~\eqref{eq:optimal_selection_routing}, the optimal selection rule is
straightforward: among the sampled weak responses, it selects the one with the
highest probability of being correct. However, the optimal routing policy is more involved: it is a threshold rule over the score \(\Delta(h_\sigma)\) defined in~\eqref{eq:routing_gain}. This score measures the improvement in correctness probability from querying
the strong model, $t_p$, instead of returning the best weak response,
$\max_{i\leq \sigma(p)} q_p(y_{w,i})$. The policy routes whenever this improvement exceeds
the global threshold $\tau$, which controls the routing budget. Thus, routing
depends not only on how reliable the weak response is, but on how much the
strong model is expected to improve upon it.


\paragraph{One-shot sampling as a special case.}
When $\sigma(P)\equiv 1$, selection is trivial and routing is the only
remaining decision. In this case,~(\ref{eq:routing_gain}) reduces to
$
\Delta(h_\sigma)
=
t_p-q_p(y_{w,1})$.
This formalizes the accuracy-gain intuition introduced in
(\ref{eq:informal-routing-gain}), also recovering the known one-shot
characterization of
\citet{mozannar2021consistentestimatorslearningdefer,jitkrittum2024doesconfidencebasedcascadedeferral}.


\section{Approximations}
\label{sec:approximation}

The optimal routing rule derived in
Section~\ref{sec:routing_fundamentals}, particularly
equations~\ref{eq:routing_gain} and~\ref{eq:optimal_selection_routing},
depends on the conditional correctness probabilities $q_p$ and
$t_p$ defined in~\eqref{eq:population_quantities}. In many practical tasks, these population quantities are
unavailable without expensive human labels or exhaustive ground-truth
verifiers. In this section, we derive a principled approximation to the
optimal routing score in~\eqref{eq:routing_gain} that eliminates this
dependence on ground-truth correctness. We first present the resulting
routing score and its practical interpretation, and then derive it from the
population-optimal rule.

Our approximation, which we call \emph{corrected disagreement}, takes a
simple form. Recall that
$h_\sigma=(p,y_{w,1},\ldots,y_{w,\sigma(p)})$ contains the prompt and its
sampled weak-model responses, and that $Y_s$ and $Y_s'$ are independent
responses drawn from the strong model for the same prompt. We define the
corrected-disagreement routing score as
\begin{equation}
\label{eq:corrected-disagreement}
\boxed{
S_{\mathrm{CD}}(h_\sigma)
=
\underbrace{
\min_{i\leq\sigma(p)}
\Pr(Y_s\neq y_{w,i}\mid P=p)
}_{\text{weak--strong disagreement}}
-
\underbrace{
\Pr(Y_s\neq Y_s'\mid P=p)
}_{\text{strong-model self-disagreement}}
}.
\end{equation}

The first term measures the disagreement of the strong model with the
collection of sampled weak responses. It corresponds to selecting the weak
response that the strong model is least likely to disagree with. The second
term corrects for disagreement arising from uncertainty within the strong
model itself. Importantly, both terms can be learned from model-generated data without
ground-truth labels. We discuss how to learn these quantities in practice
in Section~\ref{sec:finite-sample-algorithms}.



We next show $S_{\mathrm{CD}}$ arises as a principled
approximation to the population-optimal routing rule.

\subsection{Deriving Corrected Disagreement}
\label{sec:corrected-disagreement}

The population-optimal routing gain
in~\eqref{eq:routing_gain} is
$
\Delta(h_\sigma)
=
t_p-\max_{i\leq\sigma(p)}q_p(y_{w,i}),
$
where $q_p$ and $t_p$ are defined in~\eqref{eq:population_quantities}.


Let
$
s_p(y):=\Pr(Y_s=y\mid P=p)
$
denote the strong model's predictive distribution. Then
\begin{align}
\Delta(h_\sigma)
&\overset{(a)}{=}
\sum_{y\in\mathcal Y}q_p(y)s_p(y)
-\max_{i\leq\sigma(p)}q_p(y_{w,i})
\overset{(b)}{\approx}
\sum_{y\in\mathcal Y}s_p(y)^2
-\max_{i\leq\sigma(p)}s_p(y_{w,i})
\nonumber\\
&\overset{(c)}{=}
\min_{i\leq\sigma(p)}
\Pr(Y_s\neq y_{w,i}\mid P=p)
-\Pr(Y_s\neq Y_s'\mid P=p)
=
S_{\mathrm{CD}}(h_\sigma).
\label{eq:corrected-disagreement-derivation}
\end{align}
Here, $(a)$ uses the conditional independence
$Y_s\perp\!\!\!\perp Y\mid P$ to write
$t_p=\sum_y q_p(y)s_p(y)$.
In $(b)$, we replace the unknown ground-truth distribution $q_p$ by the
strong model's predictive distribution $s_p$.
Finally, $(c)$ follows from
$\Pr(Y_s\neq y\mid P=p)=1-s_p(y)$ and
$\Pr(Y_s\neq Y_s'\mid P=p)
=1-\sum_y s_p(y)^2$.
Thus, corrected disagreement approximates the population-optimal routing
gain using only comparisons between model-generated responses.

\paragraph{Approximation quality.}
The only approximation in
\eqref{eq:corrected-disagreement-derivation} occurs in step~$(b)$, where the
unknown ground-truth distribution $q_p$ is replaced by the strong model's
predictive distribution $s_p$. As the strong model becomes a better proxy
for the ground-truth distribution, this approximation becomes more accurate.
In the ideal case $s_p=q_p$, step~$(b)$ is exact, and corrected disagreement
recovers the population-optimal routing gain:
$
S_{\mathrm{CD}}(h_\sigma)=\Delta(h_\sigma).
$
Thus, the approximation gap naturally shrinks as $s_p$ approaches $q_p$.

\subsection{Population Comparison of Routing Scores}
\label{subsec:population_comparison}

In this section, we compare the quality of different routing scores at the
population level. We use extensive Monte Carlo sampling to accurately estimate
the probabilities defining each score, allowing us to compare the underlying
routing signals independently of how they are learned in practice. For
simplicity, we consider the one-shot setting $\sigma(P)\equiv 1$. We compare
four routing signals:


\begin{figure}[ht!]
    \centering
\includegraphics[width=1\linewidth]{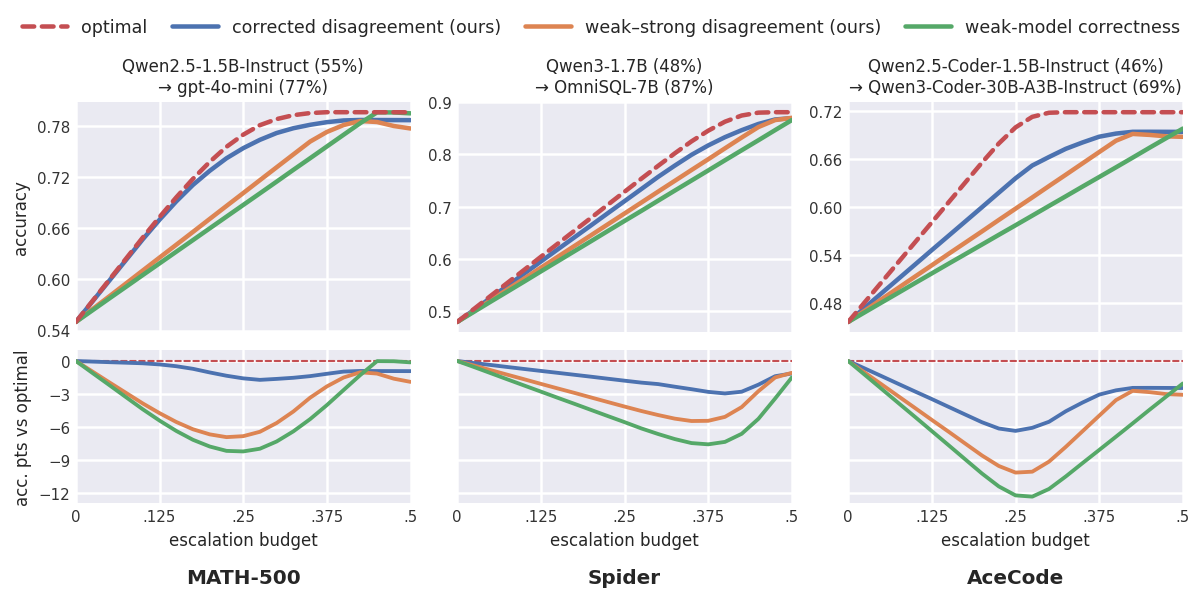}
    \caption{Corrected disagreement better approximates optimal routing than alternatives.}
    \label{fig:population-routing}
\end{figure}
\begin{itemize}[leftmargin=*, labelsep=0.5em]
    \item \textbf{Optimal routing gain:} $\Delta(p, y_w) = \Pr(Y_s = Y \mid P = p) - \Pr(Y = y_w \mid P = p)$. This is the exact gain defined in Theorem~\ref{thm:optimal_routing}. It serves as an oracle upper bound representing full knowledge of both models' correctness, and requires ground-truth supervision.
    \item \textbf{Weak-model correctness:} $S_{\mathrm{cor}}(p, y_w) = 1 - \Pr(Y = y_w \mid P = p)$. This represents optimal confidence-based routing, routing when the weak model is likely incorrect. It serves as the population oracle for confidence- and consistency-based methods and requires ground-truth labels.
    \item \textbf{Weak-strong disagreement:} $S_{\mathrm{disagree}}(p, y_w) = \Pr(Y_s \ne y_w \mid P = p)$. This baseline routes purely based on raw cross-model disagreement, without using ground-truth labels.
    \item \textbf{Corrected disagreement:} $S_{\mathrm{CD}}(p, y_w)$ (Equation~\ref{eq:corrected-disagreement}). This represents our proposed approximation and requires no ground-truth supervision.
\end{itemize}


Figure~\ref{fig:population-routing} reports results across the three varying benchmarks. In each panel, the top row displays weak-to-strong routing accuracy across informative escalation budget regime $B \in [0, 0.5]$, and the bottom row illustrates the gap relative to optimal routing. The results reveal three central insights:

\textbf{(i)} Corrected disagreement, $S_{\mathrm{CD}}$, consistently outperforms uncorrected weak--strong disagreement, $S_{\mathrm{disagree}}$, across all budgets. \textbf{(ii)} In operational regimes where the escalation budget is constrained ($B \le 0.4$), $S_{\mathrm{CD}}$ and $S_{\mathrm{disagree}}$ outperform the weak-model correctness baseline. The reason is, while correctness indicates when the weak model is wrong, it cannot anticipate whether the strong model will succeed. \textbf{(iii)} Because $S_{\mathrm{CD}}$ replaces ground truth with the strong model's predictive distribution, the approximation gap of $S_{\mathrm{CD}}$ is smaller for more accurate strong models.

\section{Finite-Sample Routing Algorithms}
\label{sec:finite-sample-algorithms}

We now turn the corrected-disagreement score into a practical routing
algorithm. We describe two components: first, how the learned score is used
at test time to route between the weak and strong models. Second, how
this score can be learned using only weak- and strong-model
generations.

\paragraph{Routing algorithm.}
Following Theorem~\ref{thm:optimal_routing}, we fix a sampling policy
$\sigma$ and a selection policy $\phi$. Recall from
Section~\ref{sec:routing_fundamentals} that $\sigma(p)$ specifies the number
of weak responses generated for prompt $p$, $\phi(h_\sigma)$ selects one of
these responses, and
$h_\sigma=(p,y_{w,1},\ldots,y_{w,\sigma(p)})$ denotes the resulting history.

Given a routing score $\widehat S(h_\sigma)\in\mathbb R$, test-time routing proceeds as follows:

\begin{tcolorbox}[
    colback=blue!2!white,
    colframe=blue!35!black,
    boxrule=0.5pt,
    arc=1.5mm,
    left=6pt,
    right=6pt,
    top=4pt,
    bottom=4pt
]
\begin{enumerate}
    \item Generate $\sigma(p)$ weak-model responses, forming $h_\sigma$.
    \item Compute the routing score $\widehat S(h_\sigma)$.
    \item If $\widehat S(h_\sigma)>\tau$, query the strong model; otherwise,
    return the weak response selected by $\phi(h_\sigma)$.
\end{enumerate}
\end{tcolorbox}
The threshold $\tau$ can be calibrated on held-out data to meet a desired strong-model budget $B_s$. More generally, the threshold can be updated online to maintain budget constraints over time, as in \citet{kiyani2026strategicdecisionsupportai}.

To recap, Theorem~\ref{thm:optimal_routing} shows that the optimal routing
rule is obtained by setting the routing score to the accuracy gain
$\Delta(h_\sigma)$ in~\eqref{eq:routing_gain}. In
Section~\ref{sec:corrected-disagreement}, we approximated this optimal score
by $S_{\mathrm{CD}}(h_\sigma)$ in~\eqref{eq:corrected-disagreement}, which,
unlike $\Delta(h_\sigma)$, depends only on model disagreement and does not
require ground-truth supervision. We now complete the framework by showing
how $S_{\mathrm{CD}}$ can be learned in practice using only model-generated
data.

\paragraph{Learning corrected disagreement.}
The corrected-disagreement score in
\eqref{eq:corrected-disagreement} consists of two quantities: weak--strong
disagreement and strong-model self-disagreement. We learn both offline using
only model-generated responses, without ground-truth labels.

For each training prompt $p$, we generate $K_w$ independent responses from
the weak model, $y_{w,1},\ldots,y_{w,K_w}$, and $K_s$ independent responses
from the strong model, $y_{s,1},\ldots,y_{s,K_s}$. Let
$A(y,y')\in\{0,1\}$ indicate whether two responses $y$ and $y'$ disagree. These generations allow us to construct empirical training targets for the
two terms appearing in $S_{\mathrm{CD}}$.

\emph{Weak--strong disagreement.}
For each weak response $y_{w,i}$, we define
\begin{equation}
\label{eq:disagreement-target}
\widehat d(p,y_{w,i})
:=
\frac{1}{K_s}\sum_{k=1}^{K_s} A(y_{w,i},y_{s,k}),
\qquad i=1,\ldots,K_w.
\end{equation}
This is the fraction of strong-model responses that disagree with
$y_{w,i}$, and hence provides an empirical estimate of the weak--strong
disagreement probability
$\Pr(Y_s\neq y_{w,i}\mid P=p)$.

\emph{Strong-model self-disagreement.}
We similarly define
\begin{equation}
\label{eq:self-disagreement-target}
\widehat g(p)
:=
\frac{1}{K_s(K_s-1)}
\sum_{k\neq k'} A(y_{s,k},y_{s,k'}).
\end{equation}
This is the fraction of pairs of strong responses that disagree, and hence
provides an empirical estimate of the strong-model self-disagreement
probability
$\Pr(Y_s\neq Y_s'\mid P=p)$.

The disagreement indicator $A$ depends on the task and only needs to
determine whether two responses are equivalent, rather than whether either
response is correct. For example, it can be implemented using exact matching,
task-specific parsing or execution, or an LLM judge. Importantly, $A$ is
used only offline to construct the training targets and is not required at
test time.

We train two predictors, $\widehat f_d(p,y_w)$ and $\widehat f_g(p)$, to
predict the targets in~\eqref{eq:disagreement-target} and
\eqref{eq:self-disagreement-target}, respectively. The first predicts the
probability that the strong model disagrees with a given weak response,
while the second predicts the strong model's self-disagreement from the
prompt alone. In Appendix~\ref{app:probes}, we provide implementation details for the predictors used in our experiments.




At test time, we combine the two predictors according to the
corrected-disagreement score:
\begin{equation}
\label{eq:learned-corrected-disagreement}
\widehat S_{\mathrm{CD}}(h_\sigma)
=
\min_{i\leq\sigma(p)}
\widehat f_d(p,y_{w,i})
-
\widehat f_g(p).
\end{equation}
Thus, $\widehat S_{\mathrm{CD}}$ can be computed using only the prompt and
the sampled weak-model responses. In particular, neither the strong model
nor ground-truth labels are required at test time.

\section{Experiments}
\label{sec:experiments}
In this section, we evaluate our disagreement-based routing framework across two settings. Section~\ref{subsec:agentic_experiments} first examines multi-step agentic environments, where step-level correctness is notoriously difficult to define. Section~\ref{subsec:finite_single_turn} considers single-turn reasoning tasks,
where gold answers provide a rigorous testbed to directly compare our approach
with ground-truth-supervised baselines.



\textbf{Baselines.} We benchmark against established routing signals, categorized by their test-time inference cost and reliance on ground-truth labels:

\textit{Single-response, label-free signals ($N=1$):} 
Evaluating a single weak response adds zero test-time generation overhead. We compare \textsc{Corrected disagreement (ours)} (Equation~\ref{eq:corrected-disagreement}); \textsc{Weak--strong disagreement (ours)}, isolating the value of the self-disagreement penalty; and \textsc{Token entropy} \citep{gupta2024languagemodelcascadestokenlevel}, averagin internal predictive uncertainty over the generated sequence.


\textit{Multi-response, label-free signals ($N>1$):} These assess confidence via weak-model self-agreement, a standard approach for cascade deferral \citep{yue2024largelanguagemodelcascades}. Though label-free, generating and clustering $N$ responses significantly increases inference overhead. We compare against \textsc{Self-consistency} \citep{wang2023selfconsistencyimproveschainthought}, routing via the plurality cluster's vote share, and \textsc{Semantic entropy} \citep{kuhn2023semanticuncertaintylinguisticinvariances}, routing via the cluster distribution's entropy.

\textit{Supervised reference:} For benchmark with ground-truth labels, we train a \textsc{Proxy reward} linear probe $\widehat{f}_c$. Requiring privileged supervision unavailable in open-ended tasks, it serves strictly as a supervised upper-bound reference rather than a fair label-free competitor.

\textbf{Evaluation metrics.} We evaluate cascade performance by plotting task success (final-answer accuracy for single-turn benchmarks, and end-to-end trajectory success for agentic environments) against the realized routing budget (the fraction of escalated queries or actions). To isolate the quality of the routing signal, we fix sampling and selection across all baselines and, at each routing budget, escalate the instances with the highest scores. This allows us to compare routing signals under the same sampling and selection policies without introducing error from threshold calibration. Selection ablations are deferred to Appendix~\ref{app:ablations-on-selection-policy}.

\subsection{Agentic Environments: Routing Without Ground Truth}
\label{subsec:agentic_experiments}
In multi-step environments, an agent completes a goal through a sequence of actions. Let $H_t = (p, y^1, o^1, \dots, y^{t-1}, o^{t-1})$ denote the trajectory history at step $t$, comprising the prompt $p$, past actions $y_i$, and observations $o_i$. Rather than escalating entire trajectories, we formulate routing at the \emph{action level}: given $H_t$, the router decides whether to execute the weak model's action locally ($y^t_w$) or defer that specific step to the strong model ($y^t_s$).
Defining step-level correctness in sequential environments is fundamentally difficult. Because tasks admit multiple valid paths, and expert evaluation is costly to scale\citep{lù2025agentrewardbenchevaluatingautomaticevaluations}. Furthermore, locally plausible actions may cause delayed downstream failures, making long-horizon credit assignment ambiguous \citep{fan2026agentprocessbenchdiagnosingsteplevelprocess}. Consequently, correctness-supervised routers and automated LLM judges remain notoriously unreliable in agentic domains \citep{shinn2023reflexionlanguageagentsverbal, zheng2023judgingllmasajudgemtbenchchatbot}.

While correctness is ambiguous, step-level \emph{disagreement} remains objective and easily verifiable. Let $\mathcal{T}(H_t, y)$ denote the transition triggered by action $y$ (e.g., a parsed tool call or state change). Two actions are equivalent if they induce identical transitions: $A(y_w, y_s) = \mathbbm{1}\{\mathcal{T}(H_t, y_w) = \mathcal{T}(H_t, y_s)\}$. Relying on observable mechanics rather than subjective evaluation enables effortless data collection from offline rollouts, entirely bypassing human labels and complex reward engineering.

\textbf{Datasets and models.} We evaluate two interactive environments, generating $K=10$ samples per step for training targets and multi-sample baselines. In \textsc{BFCL Multi-Turn} \citep{patil2025the} (160 train / 40 test tasks), Qwen3-8B routes to GPT-5-mini; action agreement requires identical function calls and arguments. In \textsc{WebShop} \citep{yao2023webshopscalablerealworldweb} (2,000 train / 500 test), Qwen3-8B routes to a task-specialized RL policy. To prevent distribution shifts by test-time strong-model interventions, we collect training data via an iterative, DAgger-style schedule \citep{li2026revisitingdaggererallmagents} (Appendix~\ref{app:dagger}).
\begin{figure}[ht!]
    \centering
    \includegraphics[width=1\linewidth]{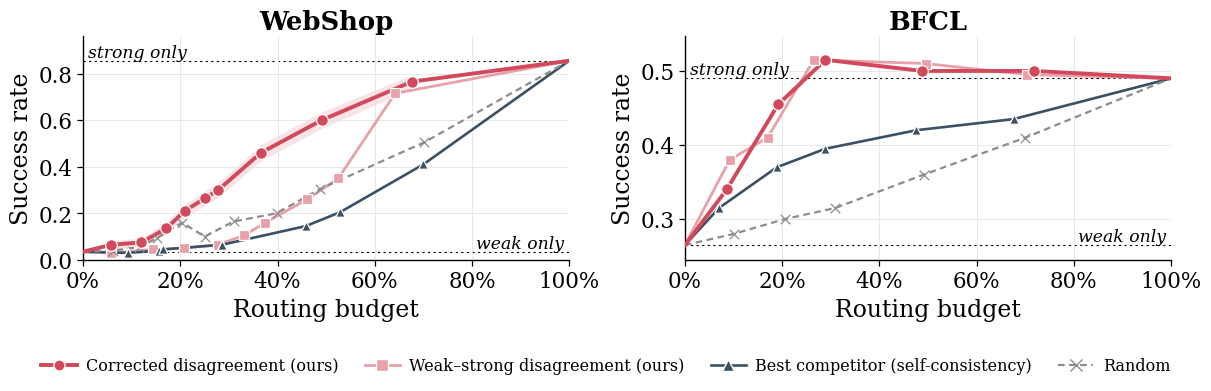}
    \caption{Action-level routing performance in agentic environments. For clarity, we plot our proposed signals against only the best-performing label-free baseline. Results are averaged over m=5 runs; comprehensive baselines, standard deviations, and tabular data are provided in Appendix~\ref{app:agentic_additional}.}.
    \label{fig:agentic-performance}
\end{figure}

Figure~\ref{fig:agentic-performance} shows both disagreement-based signals consistently outperform the best label-free baseline. Crucially, contrasting the two environments reveals how teacher capability dictates the value of the self-disagreement correction. In \textsc{WebShop}, where the strong model is highly capable, corrected disagreement significantly outperforms raw weak--strong disagreement. This reinforces our population-level finding: under a stronger teacher, the self-disagreement penalty prevents wasteful queries and the approximation approaches optimal routing. Conversely, in \textsc{BFCL}, the strong model itself struggles; here, the self-disagreement correction provides negligible benefit, rendering the gap between the two scores statistically insignificant. Beyond performance gains, the results highlight a practical advantage for agentic workflows. Because step-level correctness is inherently vague, determining exactly when an action requires escalation remains a persistent challenge. Relying instead on observable cross-model disagreement provides practitioners with a reliable, objective signal to safely guide routing decisions.

\subsection{Controlled Ablations on Single-Turn Benchmarks}
\label{subsec:finite_single_turn}

Single-turn reasoning benchmarks provide an ideal testbed for controlled ablations that are difficult to isolate in multi-step settings. Because these tasks feature canonical gold answers, they allow us to directly compare our label-free signals against a supervised proxy reward and study the impact of the weak-model sampling budget.


\textbf{Datasets and models.} We use three diverse datasets spanning a spectrum of evaluation complexity: \textsc{Math}-500 \citep{hendrycks2021measuringmathematicalproblemsolving} (Phi-3.5-mini-instruct $\to$ GPT-4o-mini; Disagreement indicator $A$ is based on exact final-answer match), \textsc{Spider} \citep{yu2019spiderlargescalehumanlabeleddataset} (Qwen3-1.7B $\to$ OmniSQL-7B; $A$ is identical SQL execution), and \textsc{AceCode} \citep{AceCoder} (Qwen2.5-Coder-1.5B-Instruct $\to$ Qwen3-Coder-30B-A3B-Instruct; $A$ is identical hidden unit test behavior). These tasks progress from closed-form verification to open-ended generation, where structurally distinct outputs can yield identical functional results. This spectrum allows us to observe how routing methods scale as as simple answer-matching becomes insufficient, reflecting the practical challenges of open-ended generations. Finally, to confirm that our findings hold across different capability gaps, we evaluate additional Weak--strong model pairings in Appendix~\ref{app:finite-sample-results}.

\begin{figure}[ht!]
    \centering
    \includegraphics[width=1\linewidth]{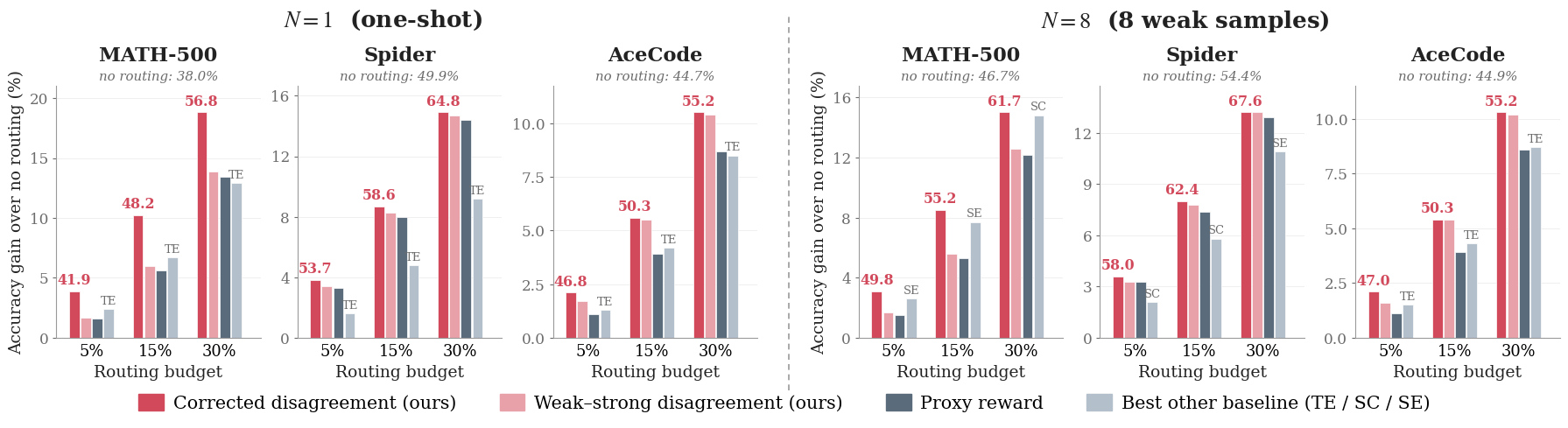}
    \caption{Finite-sample routing performance on single-turn benchmarks for one-shot ($N=1$) and multi-sample ($N=8$) regimes. Corrected disagreement consistently outperforms raw disagreement and the best label-free baselines. Multi-sample baselines only close the gap on closed-form tasks (MATH) where clustering is trivial, but break down on open-ended tasks (Spider, AceCode).}
    \label{fig:single-turn-performance}
\end{figure}
\textbf{Results and discussion.} 

Figure~\ref{fig:single-turn-performance} shows cascade accuracy at representative representative escalation budgets (further budgets in Appendix~\ref{app:finite-sample-results}). We compare the one-shot regime ($N=1$), where consistency scores are undefined, against a multi-sample regime using uniform sampling ($N=8$). For clarity, we plot our scores against the supervised proxy reward and only the strongest label-free baseline (exhaustive comparisons and additional budget values and sample sizes are deferred to Appendix~\ref{app:finite-sample-results}).

The empirical results validate our population-level analysis from Section~\ref{subsec:population_comparison}. First, at these practical budget constraints, corrected disagreement consistently outperforms the supervised proxy reward. Second, across all three benchmarks, corrected disagreement yields higher accuracy than raw weak--strong disagreement, confirming that penalizing strong-model self-inconsistency successfully prevents the cascade from wasting queries on ambiguous prompts.

Finally, the multi-sample ($N=8$) results expose the brittleness of consistency-based routing. On MATH-500, where answers are easily grouped, self-consistency leverages additional samples to narrow the gap with our approach. However, on \textsc{Spider} and \textsc{AceCode}, where semantic clustering is difficult, consistency baselines fail to scale. In these domains, our learned probe maintains superior performance without additional inference overhead of generating and clustering multiple responses.

\section{Conclusion}
In this work, we study weak-to-strong routing in its general setting, jointly accounting for test-time sampling of candidate weak responses, candidate selection, and escalation to a stronger model. Conventional cascades and standard routing approaches rely on signals of response correctness, which are often costly to obtain and ambiguous to define in open-ended or agentic tasks. To bypass the need for ground-truth supervision, we introduce a perspective based on model disagreement. We derive \emph{corrected disagreement}, a principled approximation to the optimal routing rule that balances cross-model disagreement against the strong model's internal uncertainty. This formulation provides an objective learning signal that can be collected at scale from model-generated responses alone. Across multi-step agentic environments and reasoning benchmarks, our learned disagreement signal consistently outperforms label-free baselines, showing that our principled approximation offers a practical and scalable path to guiding hybrid inference without ground-truth supervision.

\section{Acknowledgments}
This work was supported by EnCORE, the Institute for Emerging CORE Methods in Data Science, and NSF award 2502489 through the MFAI: Mathematical Foundations of Alignment in Generative Artificial Intelligence program. SK also received support through an AWS gift to Penn Engineering’s ASSET Center for Trustworthy AI.

\bibliography{iclr2027_conference}
\bibliographystyle{iclr2027_conference}

\appendix
\section{Appendix}

\subsection{Extended Related Works}
\label{app:extended_related_works}

As large language models are increasingly deployed, a broad literature has emerged around adaptive inference to balance response quality and computational cost. This design space generally splits into two paradigms: prompt-based routing, which selects a model before any text is generated \citep{ong2025routellmlearningroutellms, ding2024hybridllmcostefficientqualityaware}, and sequential cascades, which evaluate a cheaper model's output before deciding whether to escalate \citep{chen2023frugalgptuselargelanguage, yue2024largelanguagemodelcascades,jitkrittum2024doesconfidencebasedcascadedeferral}. While recent works such as \cite{dekoninck2025unifiedapproachroutingcascading} develop a unified formulation bridging both approaches, our work is specifically grounded in the cascade paradigm. Motivated by the growing need to deploy LLMs locally on resource-constrained edge devices \citep{liu2024mobilellmoptimizingsubbillionparameter,yu2024edgellmenablingefficientlarge,ravichandran2025distillingondevicelanguagemodels}, we study a two-model cascade with a significant capability gap. In this setting, a weak local model first generates candidate responses, and the system subsequently decides whether to invoke a more capable, expensive strong model based entirely on the realized generation history. To contextualize our contributions, the remainder of this section categorizes prior work along three critical axes: when the routing decision occurs, the required test-time inference compute, and the reliance on ground-truth supervision.


\textbf{Prompt-Based Routing and Learning to Defer}.
Prompt-based routers learn which model should answer an input without first observing its generated response. Building on classical literature regarding optimal deferral rules \citep{mozannar2021consistentestimatorslearningdefer, jitkrittum2024doesconfidencebasedcascadedeferral}, methods like Hybrid LLM \citep{ding2024hybridllmcostefficientqualityaware} and Route LLM \citep{ong2025routellmlearningroutellms} allocate queries between models using learned predictions of quality. Other approaches focus on selecting a suitable expert from a larger pool of LLMs by predicting model performance through meta-models \citep{_akota_2024}, distilling reward assessments to identify expertise \citep{lu2023routingexpertefficientrewardguided}, using input-dependent routing \citep{srivatsa2024harnessingpowermultipleminds}, estimating translation improvements using prompt representations \citep{luo2026routelmtlearnedsamplerouting}, or decomposing input uncertainty using labeled calibration data to avoid routing inherently ambiguous queries \citep{peale2026flexibleroutinguncertaintydecomposition}.
These methods fundamentally differ from our approach in two ways: they make routing decisions \emph{before} observing the generated weak response, and they heavily rely on ground-truth correctness or reward signals for training. In contrast, our cascade evaluates the realized weak response before routing, strictly in an unannotated two-model setting.

\textbf{Supervised Confidence and Deferral ($N=1$).}
Cascade methods commonly assess whether a single generated weak response is reliable enough to retain. FrugalGPT \citep{chen2023frugalgptuselargelanguage} trains a response-quality scorer to escalate when its confidence is insufficient. \cite{gupta2024languagemodelcascadestokenlevel} study token-level uncertainty and learned deferral predictors supervised directly by whether the strong model corrects a weak-model error. Automix \citep{aggarwal2025automixautomaticallymixinglanguage} utilizes a POMDP routing policy trained on self-verification judgments and ground-truth signals. Finally, SELF-REF \citep{chuang2025learningroutellmsconfidence} and Gatekeeper \citep{rabanser2025gatekeeperimprovingmodelcascades} train the weak model to express verbalized confidence or produce more useful confidence scores. All of these methods rely on task-specific correctness labels for supervision, and cannot be deployed in our purely label-free setting. Motivated by these works, our supervised weak-correctness probe \textsc{Proxy Reward} serves as a controlled oracle reference to represent this category of quality-based deferral. Our proposed framework assumes no access to ground-truth correctness or improvement signals.

\textbf{Consistency and Ensemble-Based Cascades ($N>1$ Label-Free).}
Consistency-based approaches assess reliability without labels by comparing multiple generated responses. \cite{yue2024largelanguagemodelcascades} use self-consistency(\citep{wang2023selfconsistencyimproveschainthought}) agreement among reasoning paths for deferral. Semantic Entropy \citep{kuhn2023semanticuncertaintylinguisticinvariances} and Kernel Language Entropy \citep{Farquhar-Semantic-Entropy} measure uncertainty over clusters of equivalent answers which can be used as deferral rules in cascades. Similarly, \cite{zhang2025leveraginguncertaintyestimationefficient} rely on clustering to compute the semantic entropies of both student and teacher models to construct routing preferences. Other methods evaluate consensus across an ensemble of multiple models: ABC \citep{kolawole2025agreementbasedcascadingefficientinference} uses pairwise similarities across inexpensive models, SMOOTHIE \citep{guha2024smoothielabelfreelanguage} estimates quality from distances between embedded outputs (which requires generator triplets), and \cite{niu2026routinggenerateddataannotationfree} groups queries into skill clusters via confidence-weighted ensemble consensus. These within-model consistency methods (particularly self-consistency and semantic entropy) directly motivate our multi-sample ($N>1$) baselines. However, as a broader category, these approaches fundamentally differ from our framework in their operational assumptions and mechanics. First, ensemble routers (\citep{guha2024smoothielabelfreelanguage,niu2026routinggenerateddataannotationfree}) are designed to select among a diverse pool of multiple models, whereas we study a strict cascade between a single local weak model and a more capable strong model. Consequently, our framework only requires the weak model's internal representations; the strong model can remain entirely black-box, accessed only for its text outputs during offline training. Second, consistency methods require generating multiple responses at inference time and rely heavily on semantic clustering. As we demonstrate in Section~\ref{subsec:finite_single_turn}, grouping responses into equivalence classes is notoriously brittle for open-ended tasks.

\textbf{Learned Label-Free Reliability and Cross-Model Verification.}
Recent work has explored learning reliability predictors from unlabeled outputs or cross-model comparisons. Semantic Entropy Probes \citep{kossen2024semanticentropyprobesrobust} predict semantic-entropy targets directly from model representations to enable single-response uncertainty estimation. HaloScope \citep{du2024haloscopeharnessingunlabeledllm} derives pseudo-labels from unlabeled response representations to train a truthfulness classifier. Additionally, \cite{xue2026verifyuncertainselfconsistencyblack}, \cite{hamidieh2026complementingselfconsistencycrossmodeldisagreement}, and \cite{gorbett2026crossmodeldisagreementlabelfreecorrectness} explore cross-model disagreement and verifier surprise to quantify uncertainty. These works provide strong conceptual motivation for comparing models and utilizing hidden-state probes. However, rather than detecting hallucinations or requiring test-time queries to a second model, our approach uniquely trains a probe on offline strong-model disagreement to directly approximate the optimal cascade routing gain, evaluating it autonomously at test time.

\textbf{Allocating Test-Time Computation.}
Additional inference computation can significantly improve response quality by generating and selecting among multiple candidates. For example, self-consistency \citep{wang2023selfconsistencyimproveschainthought} aggregates sampled reasoning paths, while broader test-time compute strategies \citep{snell2024scalingllmtesttimecompute} scale sampling and refinement based on input difficulty. For routing specifically, BEST-Route \citep{ding2025bestrouteadaptivellmrouting} pre-determines a sampling budget combined with model selection, and ModelSwitch \citep{chen2025trulyneedsamplesmultillm} evaluates consistency across multi-model samples to adaptively halt generation.  Our work studies when to escalate after observing the weak model's candidate responses. We characterize the expected improvement from replacing the selected weak response with a strong-model response and develop a disagreement-based approximation to guide this decision without ground-truth correctness labels.


\textbf{Sequential Tasks and Step-Level Routing.}
In sequential tasks, escalation decisions must be made repeatedly as the agent's state evolves. TRIM \citep{kapoor2026trimhybridinferencetargeted} uses process-level feedback to decide when to replace a weak reasoning step with a strong-model step. R2V-Agent \citep{hemadri2026r2vagentteachingslms} learns step-level escalation from weak-policy episode outcomes by combining uncertainty statistics with task-specific verifier features trained via verifier-guided DPO. Alternatively, Inference-Time Distillation \citep{sarukkai2026inferencetimedistillationcostefficientagents} determines escalation based on consistency among candidate actions to guide a weak agent. These methods typically rely on explicit step-level correctness labels, task-specific process verifiers, or expensive test-time consistency checks (which we benchmark via our multi-sample baselines). Our agentic experiments (Section~\ref{subsec:agentic_experiments}) show that by learning from objective action disagreement, we can outperform consistency based alternatives and effectively route at the step level without ever requiring episode-success labels or step-correctness annotations.

\subsubsection{Population estimates of routing signals across varying weak-strong model pairs}
We extend the population-level comparison in Section~\ref{subsec:population_comparison} to the full
routing-budget range $B\in[0,1]$ for completeness and additional weak--strong model
pairs and examine how the findings vary across
models, datasets, and numbers of weak-model samples. Figure~\ref{fig:app-population-n1} reports the one-shot setting, while Figures~\ref{fig:app-population-n4}
and~\ref{fig:app-population-n8} report results with $N=4$ and $N=8$
weak-model samples, respectively.

The results support the main observations: corrected disagreement
generally improves on raw weak--strong disagreement and closely
approximates optimal routing when the strong model is accurate.
Its advantage over weak-model correctness is most apparent at limited
routing budgets; the ordering can change at larger budgets.
We first report the one-shot setting ($N=1$), then vary the number
of weak-model samples $N$. While correctness flags when the weak model is wrong, it cannot anticipate whether the strong model will actually succeed. In practice, weak-model correctness is often viewed as a highly accessible baseline because the strong model can act as an automated judge to provide those labels. Conversely, obtaining ground-truth labels for the strong model's own correctness is notoriously difficult because it typically surpasses available automated judges. Our disagreement scores bypass this labeling bottleneck entirely while explicitly capturing potential accuracy gains. At very high escalation budgets, weak-model correctness slightly surpasses corrected disagreement because it strictly avoids routing those rare instances where the weak model is correct but the strong model fails. However, in typical weak-to-strong deployments, these failure modes are infrequent, making low-budget efficiency the governing practical metric.

\begin{figure}[t]
    \centering
    \includegraphics[width=\linewidth]{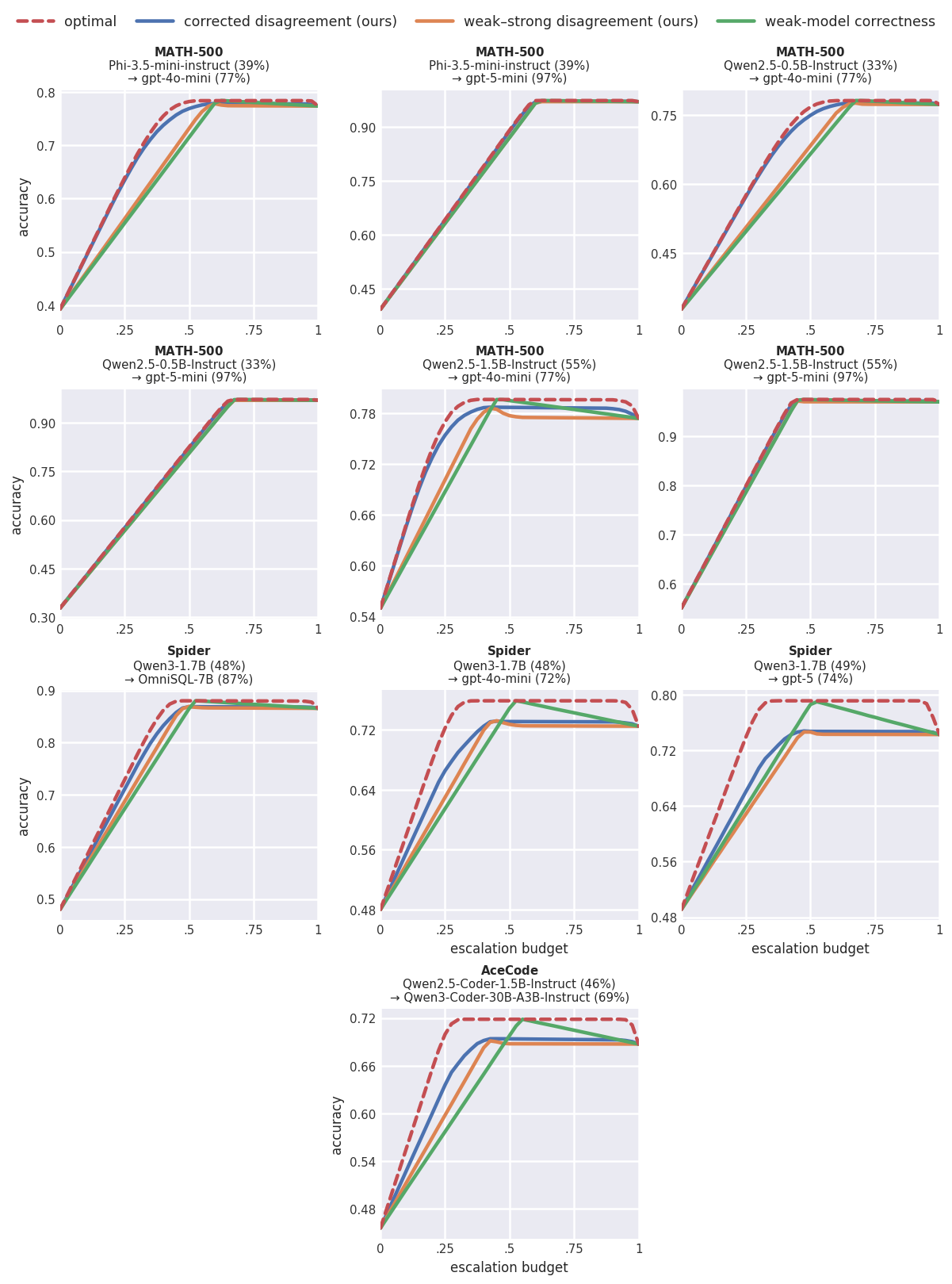}
    \caption{
    Population-level comparison of routing signals with $N=1$
    weak-model response. Each panel shows routing accuracy over
    the full budget range $B\in[0,1]$ for a different dataset and
    weak--strong model pair. We compare optimal routing,
    corrected disagreement, raw weak--strong disagreement,
    and weak-model correctness.
    }
    \label{fig:app-population-n1}
\end{figure}

\begin{figure}[t]
    \centering
    \includegraphics[width=\linewidth]{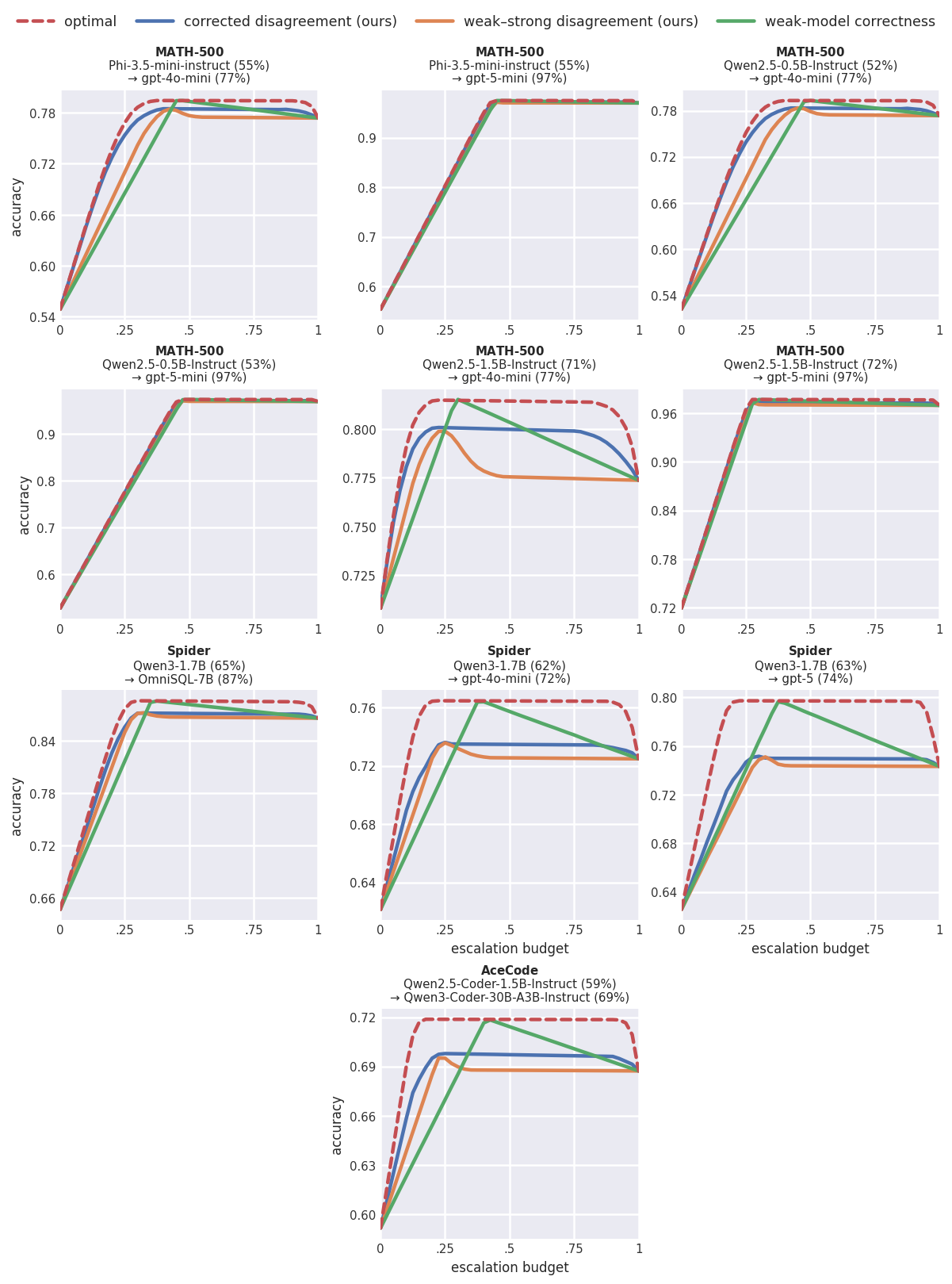}
    \caption{
    Population-level comparison of routing signals with $N=4$
    weak-model responses per prompt. Panels and routing signals
    follow Figure~\ref{fig:app-population-n1}.
    }
    \label{fig:app-population-n4}
\end{figure}

\begin{figure}[t]
    \centering
    \includegraphics[width=\linewidth]{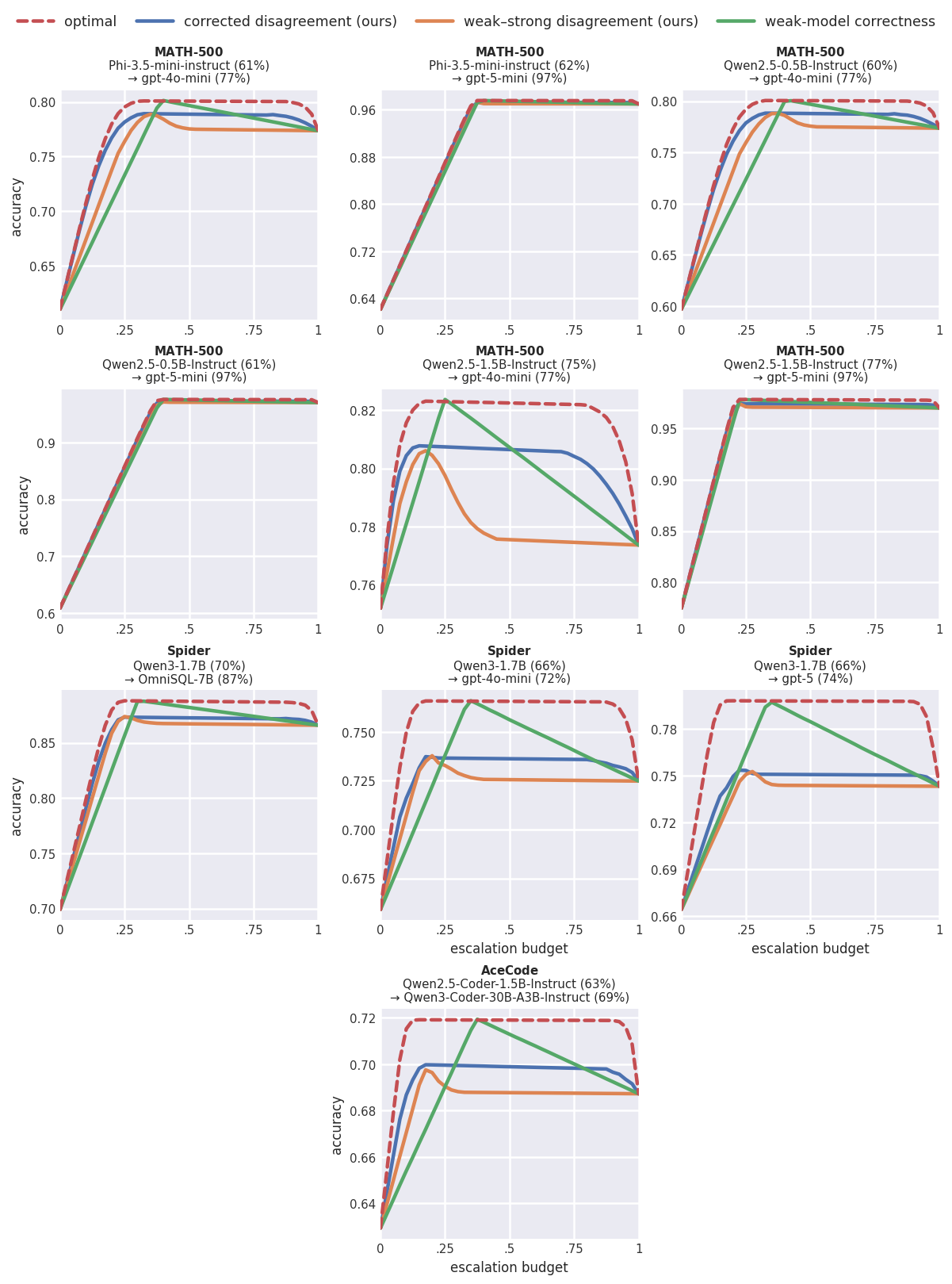}
    \caption{
    Population-level comparison of routing signals with $N=8$
    weak-model responses per prompt. Panels and routing signals
    follow Figure~\ref{fig:app-population-n1}.
    }
    \label{fig:app-population-n8}
\end{figure}


\subsection{Extended Finite-Sample Results Across Routing and Sampling Budgets}
\label{app:finite-sample-results}

We supplement Section~\ref{subsec:finite_single_turn} with detailed
results on \textsc{Math}-500, \textsc{Spider}, and \textsc{AceCode}
in Tables~\ref{tab:math500-qwen25-15b-gpt4omini},
\ref{tab:spider-qwen3-17b-omnisql7b},
and~\ref{tab:acecode-qwen25coder-15b-qwen3coder-10b}, respectively.
The tables include baselines omitted from the main-text plots, additional routing budgets, and $N\in\{1,4,8,16\}$ weak-model samples per prompt.

The results support the findings in the main text. On
\textsc{Math}-500, self-consistency and semantic entropy approach
corrected disagreement as the routing budget and sample count
increase, and slightly outperform it at the 50\% budget for $N>1$.
These consistency-based scores are less effective on \textsc{Spider}
and \textsc{AceCode}, where grouping equivalent responses is more
difficult. On \textsc{AceCode}, increasing $N$ provides little benefit
to either the selected weak response or consistency-based routing.
Disagreement-based scores remain competitive across all three
datasets, with corrected disagreement generally matching or
improving on raw weak--strong disagreement.


\begin{table*}[htp]
\centering
\small
\caption{\textsc{Math}-500 routing results for the weak--strong pair
\textbf{Phi-3.5-mini-instruct} $\rightarrow$ \textbf{GPT-4o-mini}.
The student answer is kept unless routed to the strong model. Each panel reports accuracy at
different routing budgets. The $b=0$ value is the accuracy of the kept weak answer before routing.}
\label{tab:math500-qwen25-15b-gpt4omini}
\begin{tabular}{llccccccc}
\toprule
Setting & Routing score
& $5\%$ & $10\%$ & $15\%$ & $20\%$ & $25\%$ & $30\%$ & $50\%$ \\
\midrule

\multicolumn{9}{l}{\textbf{$N=1$ weak sample per prompt} \quad ($b=0$: 0.380)} \\
Corrected disagreement & ours
    & \textbf{0.419} & \textbf{0.452} & \textbf{0.482} & \textbf{0.513} & \textbf{0.540} & \textbf{0.568} & \textbf{0.659} \\
Weak--strong disagreement & ours
    & 0.397 & 0.418 & 0.440 & 0.464 & 0.491 & 0.519 & 0.631 \\
Token entropy & baseline
    & 0.404 & 0.427 & 0.447 & 0.468 & 0.489 & 0.509 & 0.584 \\
Proxy reward$^\dagger$ & baseline
    & 0.396 & 0.415 & 0.436 & 0.460 & 0.487 & 0.514 & 0.629 \\
Oracle routing gain & oracle
    & 0.429 & 0.479 & 0.529 & 0.578 & 0.626 & 0.670 & 0.776 \\

\midrule
\multicolumn{9}{l}{\textbf{$N=4$ weak samples per prompt} \quad ($b=0$: 0.430)} \\
Corrected disagreement & ours
    & \textbf{0.465} & \textbf{0.495} & \textbf{0.524} & \textbf{0.551} & \textbf{0.576} & \textbf{0.599} & 0.677 \\
Weak--strong disagreement & ours
    & 0.447 & 0.467 & 0.489 & 0.512 & 0.539 & 0.563 & 0.664 \\
Self-consistency & baseline
    & 0.456 & 0.482 & 0.509 & 0.536 & 0.564 & 0.590 & \textbf{0.679} \\
Semantic entropy & baseline
    & 0.456 & 0.483 & 0.509 & 0.536 & 0.563 & 0.590 & 0.676 \\
Token entropy & baseline
    & 0.449 & 0.469 & 0.486 & 0.505 & 0.524 & 0.542 & 0.607 \\
Proxy reward$^\dagger$ & baseline
    & 0.445 & 0.463 & 0.484 & 0.507 & 0.533 & 0.558 & 0.661 \\
Oracle routing gain & oracle
    & 0.480 & 0.529 & 0.579 & 0.627 & 0.672 & 0.712 & 0.783 \\

\midrule
\multicolumn{9}{l}{\textbf{$N=8$ weak samples per prompt} \quad ($b=0$: 0.467)} \\
Corrected disagreement & ours
    & \textbf{0.498} & \textbf{0.524} & \textbf{0.552} & \textbf{0.575} & \textbf{0.596} & \textbf{0.617} & 0.686 \\
Weak--strong disagreement & ours
    & 0.484 & 0.503 & 0.523 & 0.547 & 0.571 & 0.593 & 0.680 \\
Self-consistency & baseline
    & 0.492 & 0.518 & 0.544 & 0.568 & 0.592 & 0.615 & \textbf{0.694} \\
Semantic entropy & baseline
    & 0.493 & 0.518 & 0.544 & 0.567 & 0.589 & 0.612 & 0.690 \\
Token entropy & baseline
    & 0.483 & 0.501 & 0.516 & 0.532 & 0.549 & 0.564 & 0.623 \\
Proxy reward$^\dagger$ & baseline
    & 0.482 & 0.499 & 0.520 & 0.543 & 0.566 & 0.589 & 0.680 \\
Oracle routing gain & oracle
    & 0.516 & 0.566 & 0.615 & 0.662 & 0.704 & 0.741 & 0.787 \\

\midrule
\multicolumn{9}{l}{\textbf{$N=16$ weak samples per prompt} \quad ($b=0$: 0.490)} \\
Corrected disagreement & ours
    & \textbf{0.518} & \textbf{0.541} & \textbf{0.567} & \textbf{0.588} & \textbf{0.609} & \textbf{0.628} & 0.691 \\
Weak--strong disagreement & ours
    & 0.507 & 0.525 & 0.544 & 0.566 & 0.588 & 0.609 & 0.687 \\
Self-consistency & baseline
    & 0.514 & 0.538 & 0.562 & 0.584 & 0.606 & 0.627 & \textbf{0.700} \\
Semantic entropy & baseline
    & 0.514 & 0.538 & 0.561 & 0.582 & 0.602 & 0.624 & 0.697 \\
Token entropy & baseline
    & 0.505 & 0.519 & 0.533 & 0.548 & 0.564 & 0.577 & 0.634 \\
Proxy reward$^\dagger$ & baseline
    & 0.505 & 0.522 & 0.542 & 0.562 & 0.585 & 0.606 & 0.687 \\
Oracle routing gain & oracle
    & 0.539 & 0.588 & 0.637 & 0.682 & 0.723 & 0.754 & 0.789 \\

\bottomrule
\end{tabular}
\vspace{0.5em}
\caption*{\footnotesize Consistency-based scores are undefined for $N=1$ and are therefore omitted in that panel.
Bold numbers mark the best non-oracle routing score at each budget within a fixed $N$.
$^\dagger$Proxy reward denotes the reward-model-based routing baseline.}
\end{table*}

\begin{table*}[htp]
\centering
\small
\caption{\textsc{Spider} routing results for the weak--strong pair
\textbf{Qwen3-1.7B} $\rightarrow$ \textbf{OmniSQL-7B}.
The weak query is kept unless routed to the strong model. Each panel reports accuracy at
different routing budgets. The $b=0$ value is the accuracy of the kept weak query before routing.}
\label{tab:spider-qwen3-17b-omnisql7b}
\begin{tabular}{llccccccc}
\toprule
Setting & Routing score
& $5\%$ & $10\%$ & $15\%$ & $20\%$ & $25\%$ & $30\%$ & $50\%$ \\
\midrule

\multicolumn{9}{l}{\textbf{$N=1$ weak sample per prompt} \quad ($b=0$: 0.499)} \\
Corrected disagreement & ours
    & \textbf{0.537} & \textbf{0.563} & \textbf{0.586} & \textbf{0.608} & \textbf{0.628} & \textbf{0.648} & \textbf{0.716} \\
Weak--strong disagreement & ours
    & 0.533 & 0.559 & 0.582 & 0.605 & 0.626 & 0.646 & \textbf{0.716} \\
Token entropy & baseline
    & 0.515 & 0.530 & 0.547 & 0.562 & 0.577 & 0.591 & 0.647 \\
Proxy reward$^\dagger$ & baseline
    & 0.532 & 0.557 & 0.579 & 0.601 & 0.623 & 0.643 & 0.714 \\
Oracle routing gain & oracle
    & 0.549 & 0.598 & 0.648 & 0.698 & 0.744 & 0.783 & 0.818 \\

\midrule
\multicolumn{9}{l}{\textbf{$N=4$ weak samples per prompt} \quad ($b=0$: 0.529)} \\
Corrected disagreement & ours
    & \textbf{0.565} & \textbf{0.591} & \textbf{0.612} & \textbf{0.632} & \textbf{0.650} & \textbf{0.668} & 0.728 \\
Weak--strong disagreement & ours
    & 0.561 & 0.586 & 0.608 & 0.629 & 0.647 & 0.666 & \textbf{0.729} \\
Self-consistency & baseline
    & 0.552 & 0.573 & 0.590 & 0.608 & 0.625 & 0.643 & 0.700 \\
Semantic entropy & baseline
    & 0.552 & 0.573 & 0.591 & 0.609 & 0.626 & 0.644 & 0.701 \\
Token entropy & baseline
    & 0.542 & 0.555 & 0.569 & 0.582 & 0.595 & 0.609 & 0.659 \\
Proxy reward$^\dagger$ & baseline
    & 0.561 & 0.585 & 0.605 & 0.625 & 0.645 & 0.663 & 0.727 \\
Oracle routing gain & oracle
    & 0.579 & 0.628 & 0.678 & 0.728 & 0.771 & 0.802 & 0.820 \\

\midrule
\multicolumn{9}{l}{\textbf{$N=8$ weak samples per prompt} \quad ($b=0$: 0.544)} \\
Corrected disagreement & ours
    & \textbf{0.580} & \textbf{0.604} & \textbf{0.624} & \textbf{0.642} & \textbf{0.660} & \textbf{0.676} & 0.734 \\
Weak--strong disagreement & ours
    & 0.577 & 0.601 & 0.622 & \textbf{0.642} & 0.659 & \textbf{0.676} & \textbf{0.735} \\
Self-consistency & baseline
    & 0.565 & 0.583 & 0.602 & 0.620 & 0.637 & 0.653 & 0.708 \\
Semantic entropy & baseline
    & 0.564 & 0.583 & 0.601 & 0.620 & 0.637 & 0.653 & 0.709 \\
Token entropy & baseline
    & 0.557 & 0.569 & 0.582 & 0.594 & 0.607 & 0.619 & 0.666 \\
Proxy reward$^\dagger$ & baseline
    & 0.577 & 0.600 & 0.618 & 0.636 & 0.655 & 0.673 & 0.732 \\
Oracle routing gain & oracle
    & 0.594 & 0.643 & 0.693 & 0.742 & 0.782 & 0.809 & 0.820 \\

\midrule
\multicolumn{9}{l}{\textbf{$N=16$ weak samples per prompt} \quad ($b=0$: 0.552)} \\
Corrected disagreement & ours
    & \textbf{0.588} & \textbf{0.612} & \textbf{0.631} & \textbf{0.649} & \textbf{0.666} & \textbf{0.682} & \textbf{0.737} \\
Weak--strong disagreement & ours
    & 0.585 & 0.608 & 0.629 & 0.647 & 0.664 & 0.681 & \textbf{0.737} \\
Self-consistency & baseline
    & 0.570 & 0.588 & 0.606 & 0.625 & 0.642 & 0.658 & 0.712 \\
Semantic entropy & baseline
    & 0.570 & 0.589 & 0.605 & 0.622 & 0.640 & 0.658 & 0.714 \\
Token entropy & baseline
    & 0.564 & 0.575 & 0.587 & 0.600 & 0.611 & 0.624 & 0.668 \\
Proxy reward$^\dagger$ & baseline
    & 0.584 & 0.607 & 0.625 & 0.643 & 0.661 & 0.677 & 0.734 \\
Oracle routing gain & oracle
    & 0.602 & 0.651 & 0.701 & 0.749 & 0.788 & 0.811 & 0.821 \\

\bottomrule
\end{tabular}
\vspace{0.5em}
\caption*{\footnotesize Consistency-based scores are undefined for $N=1$ and are therefore omitted in that panel.
Bold numbers mark the best non-oracle routing score at each budget within a fixed $N$.
$^\dagger$Proxy reward denotes the reward-model-based routing baseline.}
\end{table*}

\begin{table*}[htp]
\centering
\small
\caption{\textsc{AceCode} routing results for the weak--strong pair
\textbf{Qwen2.5-Coder-1.5B-Instruct} $\rightarrow$ \textbf{Qwen3-Coder-10B-A3B-Instruct}.
The weak program is kept unless routed to the strong model. Each panel reports accuracy at
different routing budgets. The $b=0$ value is the accuracy of the kept weak program before routing.}
\label{tab:acecode-qwen25coder-15b-qwen3coder-10b}
\begin{tabular}{llccccccc}
\toprule
Routing score & Type
& $5\%$ & $10\%$ & $15\%$ & $20\%$ & $25\%$ & $30\%$ & $50\%$ \\
\midrule

\multicolumn{9}{l}{\textbf{$N=1$ weak sample per prompt} \quad ($b=0$: 0.447)} \\
Corrected disagreement & ours
    & \textbf{0.468} & \textbf{0.486} & \textbf{0.503} & \textbf{0.519} & \textbf{0.535} & \textbf{0.552} & \textbf{0.608} \\
Weak--strong disagreement & ours
    & 0.464 & 0.484 & 0.502 & 0.518 & \textbf{0.535} & 0.551 & 0.606 \\
Token entropy & baseline
    & 0.460 & 0.475 & 0.489 & 0.504 & 0.518 & 0.532 & 0.591 \\
Proxy reward$^\dagger$ & baseline
    & 0.458 & 0.471 & 0.486 & 0.501 & 0.517 & 0.534 & 0.594 \\
Oracle routing gain & oracle
    & 0.497 & 0.547 & 0.596 & 0.646 & 0.690 & 0.708 & 0.709 \\

\midrule
\multicolumn{9}{l}{\textbf{$N=4$ weak samples per prompt} \quad ($b=0$: 0.449)} \\
Corrected disagreement & ours
    & \textbf{0.469} & \textbf{0.487} & \textbf{0.503} & \textbf{0.520} & 0.535 & \textbf{0.552} & \textbf{0.608} \\
Weak--strong disagreement & ours
    & 0.465 & 0.485 & \textbf{0.503} & 0.519 & \textbf{0.536} & \textbf{0.552} & 0.607 \\
Self-consistency & baseline
    & 0.460 & 0.472 & 0.484 & 0.495 & 0.507 & 0.519 & 0.565 \\
Semantic entropy & baseline
    & 0.459 & 0.470 & 0.482 & 0.494 & 0.506 & 0.518 & 0.564 \\
Token entropy & baseline
    & 0.461 & 0.476 & 0.490 & 0.505 & 0.520 & 0.535 & 0.592 \\
Proxy reward$^\dagger$ & baseline
    & 0.460 & 0.472 & 0.488 & 0.503 & 0.519 & 0.535 & 0.594 \\
Oracle routing gain & oracle
    & 0.500 & 0.550 & 0.599 & 0.649 & 0.692 & 0.709 & 0.710 \\

\midrule
\multicolumn{9}{l}{\textbf{$N=8$ weak samples per prompt} \quad ($b=0$: 0.449)} \\
Corrected disagreement & ours
    & \textbf{0.470} & \textbf{0.487} & \textbf{0.503} & \textbf{0.520} & \textbf{0.536} & \textbf{0.552} & \textbf{0.608} \\
Weak--strong disagreement & ours
    & 0.465 & 0.485 & \textbf{0.503} & 0.519 & 0.535 & 0.551 & 0.607 \\
Self-consistency & baseline
    & 0.459 & 0.471 & 0.482 & 0.494 & 0.505 & 0.517 & 0.563 \\
Semantic entropy & baseline
    & 0.459 & 0.470 & 0.482 & 0.493 & 0.506 & 0.517 & 0.564 \\
Token entropy & baseline
    & 0.464 & 0.478 & 0.492 & 0.507 & 0.521 & 0.536 & 0.592 \\
Proxy reward$^\dagger$ & baseline
    & 0.460 & 0.472 & 0.488 & 0.503 & 0.519 & 0.535 & 0.594 \\
Oracle routing gain & oracle
    & 0.499 & 0.549 & 0.598 & 0.648 & 0.691 & 0.708 & 0.709 \\

\midrule
\multicolumn{9}{l}{\textbf{$N=16$ weak samples per prompt} \quad ($b=0$: 0.448)} \\
Corrected disagreement & ours
    & \textbf{0.468} & \textbf{0.486} & \textbf{0.503} & \textbf{0.519} & \textbf{0.536} & \textbf{0.551} & \textbf{0.608} \\
Weak--strong disagreement & ours
    & 0.464 & 0.484 & 0.502 & 0.518 & 0.534 & 0.550 & 0.606 \\
Self-consistency & baseline
    & 0.459 & 0.471 & 0.483 & 0.495 & 0.507 & 0.518 & 0.565 \\
Semantic entropy & baseline
    & 0.458 & 0.470 & 0.481 & 0.493 & 0.504 & 0.516 & 0.563 \\
Token entropy & baseline
    & 0.463 & 0.477 & 0.491 & 0.506 & 0.521 & 0.535 & 0.592 \\
Proxy reward$^\dagger$ & baseline
    & 0.460 & 0.473 & 0.488 & 0.504 & 0.519 & 0.535 & 0.594 \\
Oracle routing gain & oracle
    & 0.499 & 0.549 & 0.599 & 0.648 & 0.691 & 0.708 & 0.709 \\

\bottomrule
\end{tabular}
\vspace{0.5em}
\caption*{\footnotesize Consistency-based scores are undefined for $N=1$ and are therefore omitted in that panel.
For $N>1$, judge-based clusters do not improve the kept-program accuracy, so gains come from improved routing signals rather than a better plurality answer.
Bold numbers mark the best non-oracle routing score at each budget within a fixed $N$.
$^\dagger$Proxy reward denotes the reward-model-based routing baseline.}
\end{table*}

\subsection{Ablations on Selection Policy}
\label{app:ablations-on-selection-policy}

In the main experiments, we use the same selection policy across
routing methods to isolate the effect of the routing score.
Here, we examine how the choice of weak response affects cascade
accuracy.

Recall from Section~\ref{sec:routing_fundamentals} that, given
$h_\sigma=(p,y_{w,1},\ldots,y_{w,N})$, the optimal selection policy is
\[
\phi^*(h_\sigma)\in
\arg\max_{i\leq N}\Pr(Y=y_{w,i}\mid P=p).
\]
Thus, the selection policy should choose the weak response most
likely to be correct. Learning this probability directly requires
correctness labels. We compare several selection rules that do
not require ground-truth supervision, together with a supervised
correctness probe:
\begin{itemize}

    \item \textbf{Majority vote.}
    Group equivalent responses and select a response from the
    largest group.

    \item \textbf{Token entropy.}
    Select the response with the lowest mean token entropy
    under the weak model.

    \item \textbf{Disagreement (ours).}
    Select the response with the lowest predicted weak--strong
    disagreement,
    $\phi(h_\sigma)\in
    \arg\min_{i\leq N}
    \widehat{f}_d(\mathbf{h}(p,y_{w,i}))$.
    This favors responses that the strong model is more likely
    to agree with.

    \item \textbf{Correctness probe.}
    Select the response with the highest predicted probability
    of correctness. This probe uses ground-truth correctness
    labels during training and serves as a supervised comparison.
\end{itemize}

\paragraph{Selection across routing scores.}
Figure~\ref{fig:math500-selection-n8} compares these selection
policies on \textsc{Math}-500 with $N=8$ weak responses per prompt.
Each panel fixes the routing score and varies the selection policy
and routing budget. Majority vote achieves the highest accuracy
at every reported budget for all four routing scores. This is
consistent with the effectiveness of grouping equivalent final
answers in mathematical reasoning. Disagreement-based selection
performs similarly to the supervised correctness probe, and both
improve on random selection and token entropy.

\paragraph{Selection across sample counts.}
Figures~\ref{fig:spider-selection-alln}
and~\ref{fig:acecode-selection-alln} extend the comparison to
\textsc{Spider} and \textsc{AceCode}. We fix the routing score to
corrected disagreement and vary both the routing budget and the
number of weak responses $N$. These experiments examine whether
selection policies can benefit from additional candidates in
tasks where identifying equivalent responses is more difficult.

\begin{figure*}[t]
\centering
\small
\setlength{\tabcolsep}{3.5pt}
\renewcommand{\arraystretch}{1.08}

\begin{minipage}[t]{0.48\textwidth}
\centering
\textbf{Routing: corrected disagreement (ours)}
\vspace{0.3em}

\includegraphics[width=\linewidth]{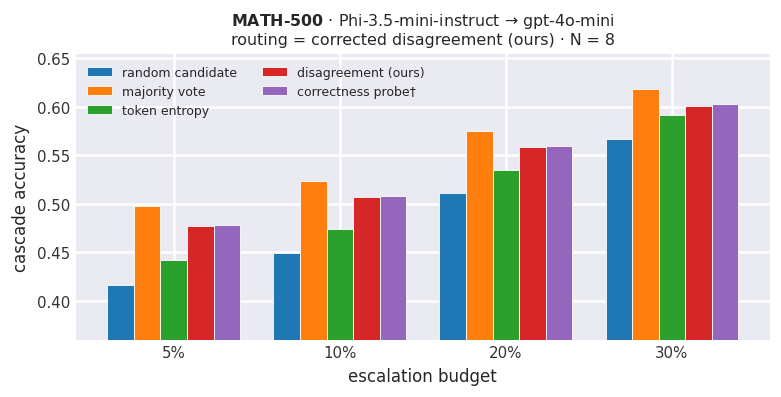}

\vspace{0.3em}
\resizebox{\linewidth}{!}{
\begin{tabular}{lccccc}
\toprule
Selection & $b=0\%$ & $b=5\%$ & $b=10\%$ & $b=20\%$ & $b=30\%$ \\
\midrule
Random candidate            & 0.377 & 0.416 & 0.449 & 0.511 & 0.567 \\
Majority vote               & \textbf{0.467} & \textbf{0.498} & \textbf{0.524} & \textbf{0.575} & \textbf{0.619} \\
Token entropy               & 0.404 & 0.443 & 0.474 & 0.535 & 0.592 \\
Disagreement (ours)         & 0.449 & 0.478 & 0.507 & 0.558 & 0.601 \\
Correctness probe$^\dagger$ & 0.450 & 0.478 & 0.508 & 0.560 & 0.602 \\
\bottomrule
\end{tabular}
}
\end{minipage}
\hfill
\begin{minipage}[t]{0.48\textwidth}
\centering
\textbf{Routing: weak--strong disagreement (ours)}
\vspace{0.3em}

\includegraphics[width=\linewidth]{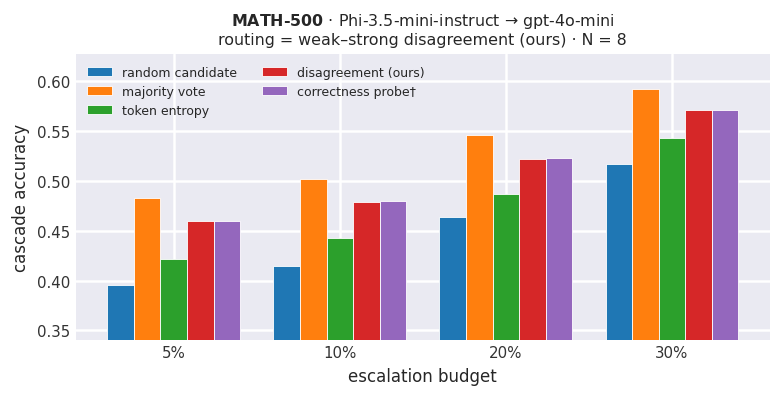}

\vspace{0.3em}
\resizebox{\linewidth}{!}{
\begin{tabular}{lccccc}
\toprule
Selection & $b=0\%$ & $b=5\%$ & $b=10\%$ & $b=20\%$ & $b=30\%$ \\
\midrule
Random candidate            & 0.377 & 0.395 & 0.415 & 0.464 & 0.517 \\
Majority vote               & \textbf{0.467} & \textbf{0.483} & \textbf{0.502} & \textbf{0.546} & \textbf{0.593} \\
Token entropy               & 0.404 & 0.422 & 0.443 & 0.487 & 0.543 \\
Disagreement (ours)         & 0.449 & 0.460 & 0.479 & 0.522 & 0.571 \\
Correctness probe$^\dagger$ & 0.450 & 0.460 & 0.480 & 0.523 & 0.571 \\
\bottomrule
\end{tabular}
}
\end{minipage}

\vspace{1.1em}

\begin{minipage}[t]{0.48\textwidth}
\centering
\textbf{Routing: proxy reward$^\dagger$}
\vspace{0.3em}

\includegraphics[width=\linewidth]{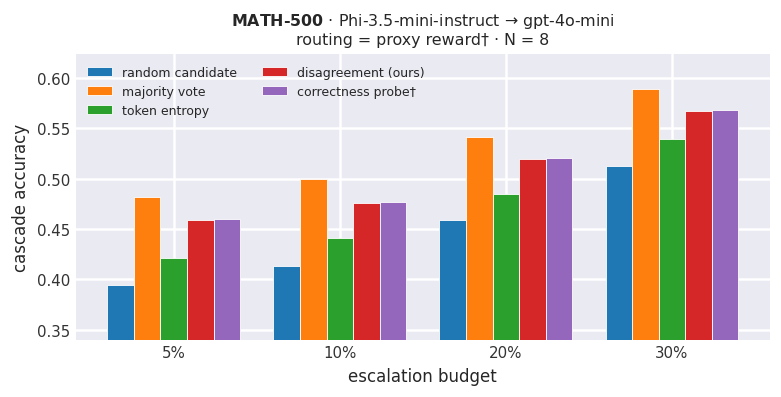}

\vspace{0.3em}
\resizebox{\linewidth}{!}{
\begin{tabular}{lccccc}
\toprule
Selection & $b=0\%$ & $b=5\%$ & $b=10\%$ & $b=20\%$ & $b=30\%$ \\
\midrule
Random candidate            & 0.377 & 0.394 & 0.413 & 0.459 & 0.513 \\
Majority vote               & \textbf{0.467} & \textbf{0.482} & \textbf{0.500} & \textbf{0.542} & \textbf{0.589} \\
Token entropy               & 0.404 & 0.421 & 0.441 & 0.485 & 0.539 \\
Disagreement (ours)         & 0.449 & 0.459 & 0.476 & 0.520 & 0.567 \\
Correctness probe$^\dagger$ & 0.450 & 0.460 & 0.477 & 0.521 & 0.568 \\
\bottomrule
\end{tabular}
}
\end{minipage}
\hfill
\begin{minipage}[t]{0.48\textwidth}
\centering
\textbf{Routing: self-consistency}
\vspace{0.3em}

\includegraphics[width=\linewidth]{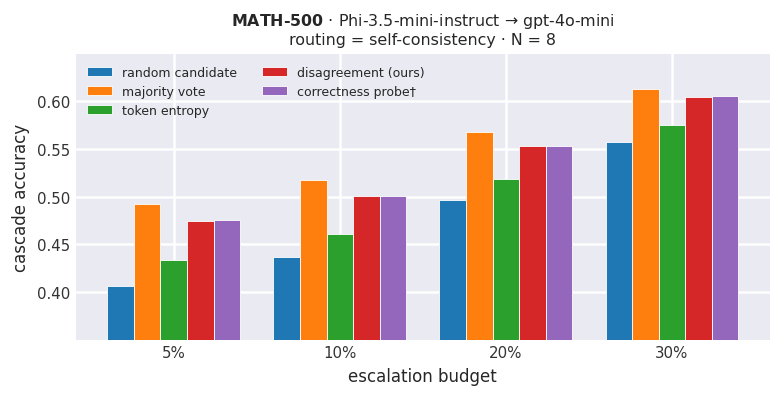}

\vspace{0.3em}
\resizebox{\linewidth}{!}{
\begin{tabular}{lccccc}
\toprule
Selection & $b=0\%$ & $b=5\%$ & $b=10\%$ & $b=20\%$ & $b=30\%$ \\
\midrule
Random candidate            & 0.377 & 0.407 & 0.437 & 0.496 & 0.557 \\
Majority vote               & \textbf{0.467} & \textbf{0.492} & \textbf{0.518} & \textbf{0.567} & \textbf{0.613} \\
Token entropy               & 0.404 & 0.433 & 0.461 & 0.519 & 0.575 \\
Disagreement (ours)         & 0.449 & 0.475 & 0.501 & 0.553 & 0.604 \\
Correctness probe$^\dagger$ & 0.450 & 0.475 & 0.501 & 0.553 & 0.605 \\
\bottomrule
\end{tabular}
}
\end{minipage}

\caption{
Effect of weak-response selection on \textsc{Math}-500 for
\textbf{Phi-3.5-mini-instruct} $\rightarrow$ \textbf{GPT-4o-mini},
using $N=8$ weak responses per prompt and 500 test prompts.
Each panel fixes the routing score and reports cascade accuracy
for different selection policies and routing budgets.
At $b=0$, accuracy depends only on the selection policy.
Bold values indicate the highest accuracy in each column.
$^\dagger$ denotes methods that use ground-truth supervision
during training.
}
\label{fig:math500-selection-n8}
\end{figure*}
\begin{figure}[t]
    \centering
    \includegraphics[width=\linewidth]{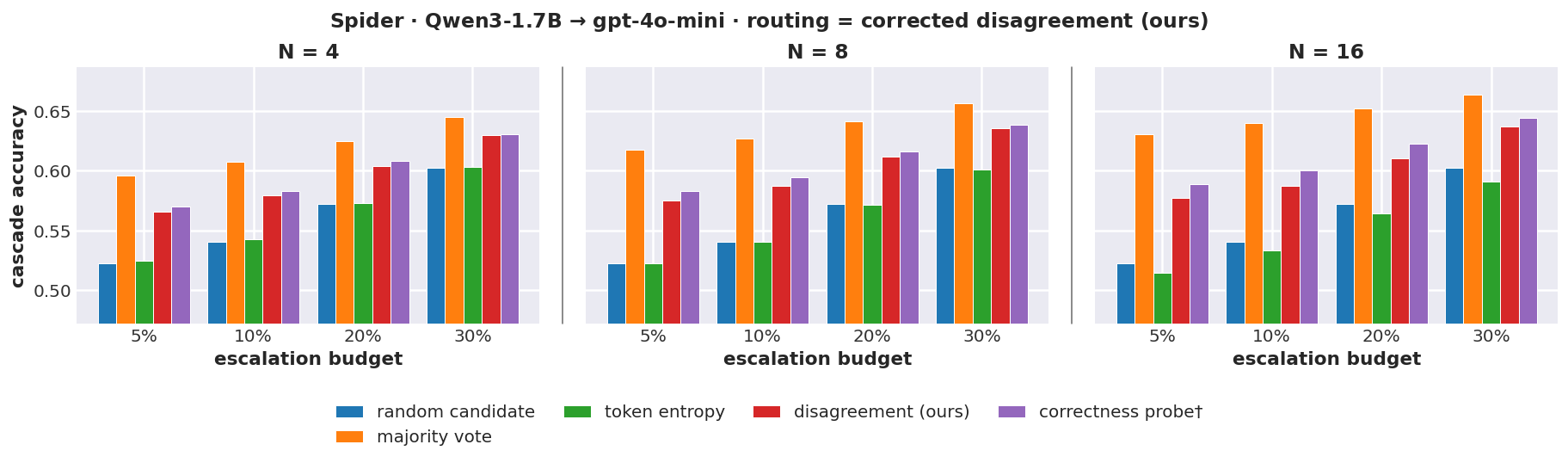}
    \caption{
    Effect of weak-response selection on \textsc{Spider}.
    Routing uses corrected disagreement throughout.
    We compare selection policies across routing budgets and
    different numbers of weak-model samples $N$.
    }
    \label{fig:spider-selection-alln}
\end{figure}

\begin{figure}[t]
    \centering
    \includegraphics[width=\linewidth]{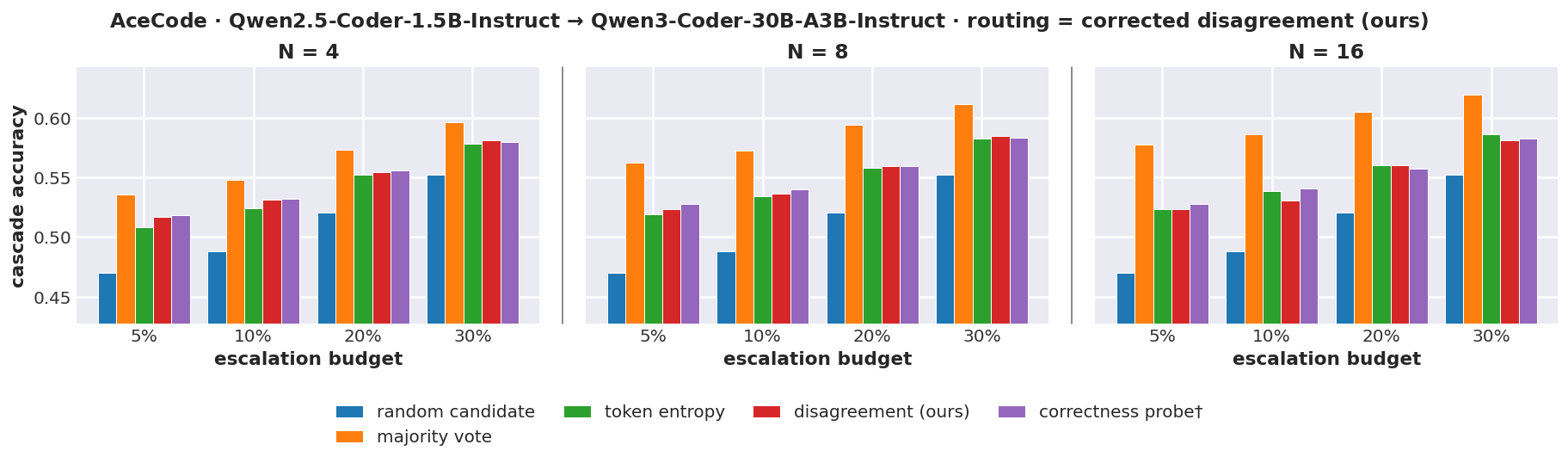}
    \caption{
    Effect of weak-response selection on \textsc{AceCode}.
    Routing uses corrected disagreement throughout.
    We compare selection policies across routing budgets and
    different numbers of weak-model samples $N$.
    }
    \label{fig:acecode-selection-alln}
\end{figure}
\subsection{Additional Results for the Agentic Experiments}
\label{app:agentic_additional}

In this section, we provide complementary results for the multi-step agentic experiments presented in Section~\ref{subsec:agentic_experiments}. 
Table~\ref{tab:app-webshop-additional} reports additional results,
including the token-entropy baseline omitted from the main text.
Alongside full-trajectory success, we report the mean task score for completeness. 

\begin{table}[h!]
\centering
\small
\caption{WebShop routing performance (200 evaluation tasks) comparing realized routing budget, end-to-end success rate, and average task score across multiple label-free baselines.}
\label{tab:app-webshop-additional}
\begin{tabular}{lcccccc}
\toprule
\textbf{Routing Strategy} & \multicolumn{3}{c}{\textbf{Success Rate (SE)}} & \multicolumn{3}{c}{\textbf{Average Task Score}} \\
\midrule
\textit{Realized Budget ($\approx$)} & \textbf{20\%} & \textbf{40\%} & \textbf{60\%} & \textbf{20\%} & \textbf{40\%} & \textbf{60\%} \\
\midrule
Corrected disagreement (ours) & \textbf{0.210} (0.029) & \textbf{0.460} (0.035) & \textbf{0.765} (0.030) & \textbf{0.665} & \textbf{0.748} & \textbf{0.828} \\
Weak--strong disagreement (ours) & 0.065 (0.017) & 0.260 (0.031) & 0.715 (0.032) & 0.466 & 0.716 & 0.880 \\
Self-consistency & 0.045 (0.015) & 0.145 (0.025) & 0.410 (0.035) & 0.448 & 0.536 & 0.755 \\
Token entropy & 0.040 (0.014) & 0.150 (0.025) & 0.405 (0.035) & 0.448 & 0.535 & 0.698 \\
Random baseline & 0.160 (0.026) & 0.200 (0.028) & 0.505 (0.035) & 0.543 & 0.590 & 0.776 \\
\bottomrule
\end{tabular}
\end{table}

\subsection{Probe Representations and Training}
\label{app:probes}

We provide additional details on the probes used to estimate the
routing signals in Section~\ref{sec:finite-sample-algorithms}. The probes are trained on empirical
targets computed from offline model generations, while the weak model remains frozen. We extract both probe inputs from the weak model's hidden states.
For the strong-model self-disagreement probe, we use the hidden
state at the final prompt token, denoted by $\mathbf{h}(p)$.
For the weak--strong disagreement probe, we use the mean hidden
state over the generated response tokens, denoted by
$\mathbf{h}(p,y_w)$. The second representation depends on both
the prompt and the generated response from the weak model. In the agentic environments,
the trajectory history $H_t$ serves as the prompt and the proposed
action serves as the response.

We train each probe:
\begin{equation}
\widehat{f}
\in \arg\min_{f\in\mathcal{F}}
\frac{1}{|\mathcal{D}|}
\sum_{(z,\widehat{u})\in\mathcal{D}}
\ell\bigl(f(\mathbf{h}(z)),\widehat{u}\bigr)
\qquad
\label{eq:appendix-probe-loss}
\end{equation}
by minimizing binary cross-entropy with empirical
probabilities as soft targets.
For $\widehat{f}_d$, each example pairs a prompt--response
representation $\mathbf{h}(p,y_w)$ with the target
$\widehat{d}(p,y_w)$ defined in
Equation~\ref{eq:disagreement-target}.
For $\widehat{f}_g$, each example pairs a prompt representation
$\mathbf{h}(p)$ with the target $\widehat{g}(p)$ defined in
Equation~\ref{eq:self-disagreement-target}.
Probe outputs are restricted to $[0,1]$.
For brevity, the main text writes these predictors as
$\widehat{f}_d(p,y_w)$ and $\widehat{f}_g(p)$, leaving the
representation extraction implicit.

Additionally, when ground-truth answers are available, we also train the
proxy-reward probe $\widehat{f}_c$ using the same prompt--response
representation and binary cross-entropy loss, with correctness
target $1-A(y_w,Y)$. This probe provides a supervised comparison;
the two disagreement probes require no ground-truth correctness
labels. At inference, computing the learned routing scores requires only
weak-model representations; the strong model is queried only
when the router chooses to escalate. We will release the
training code and experiment configurations upon acceptance,
which will including feature-extraction layers used for each model and hyperparameter details. 

\subsection{On-Policy Data Collection for Agentic Probes}
\label{app:dagger}
To ensure our learned probes generalize to test-time deployments where strong-model interventions alter the trajectory state, we collect training data using an iterative, on-policy schedule inspired by DAgger \citep{ross2011reductionimitationlearningstructured, li2026revisitingdaggererallmagents}.

During test-time routing, executing strong-model actions steers the agent into states that the weak model would not visit autonomously. Fitting probes exclusively on purely weak-model trajectories results in a distribution mismatch (the classic compounding-shift problem of behavioral cloning \citep{ross2011reductionimitationlearningstructured}.) We mitigate this by collecting training data across iterative phases: each phase rolls out the current routed policy, labels every visited state, and trains subsequent probes on the aggregated dataset from all preceding phases.

\paragraph{Dataset aggregation.}
In multi-step environments, let $H_t = (p, y^1, o^1, \dots, y^{t-1}, o^{t-1})$ denote the trajectory history at step $t$, comprising the instruction prompt $p$, past actions $y_i$, and observations $o_i$. At each step, both models propose actions: the weak model generates $K$ sampled actions $\{y_1^w, \dots, y_K^w\}$ and $K$ sampled actions $\{y_1^s, \dots, y_K^s\}$ from the strong model. ($K=10$ across all environments). For simplicity and consistency across agentic baselines, we fix the weak-model candidate selection policy $\phi(H_t)$ to execute the first sampled action ($y_1^w$). Using the task-specific transition agreement indicator $A$ defined in Section~\ref{sec:finite-sample-algorithms}, we compute empirical disagreement targets $\widehat{d}(H_t, y_i^w)$ and $\widehat{g}(H_t)$. We record the weak model's hidden states: the final history token representation $\mathbf{h}(H_t)$ for the self-disagreement probe $\widehat{f}_g$, and the mean hidden-state representation over action tokens $\mathbf{h}(H_t, y_i^w)$ for the weak--strong disagreement probe $\widehat{f}_d$.

\paragraph{Stochastic routing during collection.}
In the first phase, the weak model acts at every step. In subsequent
phases, we fit the probes on the accumulated training data and keep
them fixed throughout collection. At each visited history $H_t$,
we map the corrected disagreement score to a routing probability:
\begin{equation}
\label{eq:collection-routing}
p_{\mathrm{route}}^{(m)}(H_t,y_1^w)
=
\begin{cases}
0, & m=1, \\[2mm]
\displaystyle
\frac{
1+\widehat{f}_d^{(m)}\!\left(\mathbf{h}(H_t,y_1^w)\right)
-\widehat{f}_g^{(m)}\!\left(\mathbf{h}(H_t)\right)
}{2},
& m\geq 2.
\end{cases}
\end{equation}
Both probes take values in $[0,1]$, so the corrected disagreement
score lies in $[-1,1]$ and the routing probability lies in $[0,1]$.
We sample a routing decision from this probability and execute
either the strong model's greedy action or the first weak sample.

\begin{algorithm}[ht!]
\caption{On-Policy Collection of Disagreement Data}
\label{alg:dagger-collection}
\begin{algorithmic}[1]
\REQUIRE Tasks
$\mathcal{T}_{\mathrm{train}},\mathcal{T}_{\mathrm{test}}$;
weak and strong models; selection policy $\phi$;
samples $K$; phases $M$
\STATE $\mathcal{D}\gets\emptyset$
\FOR{$m=1,\ldots,M$}
    \IF{$m\geq 2$}
        \STATE Fit $\widehat{f}_d^{(m)},\widehat{f}_g^{(m)}$
        on training-task records in $\mathcal{D}$
    \ENDIF
    \FOR{each task $u\in
        \mathcal{T}_{\mathrm{train}}\cup\mathcal{T}_{\mathrm{test}}$}
        \STATE Initialize trajectory history $H_1$; set $t\gets 1$
        \WHILE{task $u$ is not finished}
            \STATE Draw $y_{w,1},\ldots,y_{w,K}$
            independently from the weak model given $H_t$
            \STATE Draw $y_{s,1},\ldots,y_{s,K}$
            independently from the strong model given $H_t$
            \STATE $h_{t,K}\gets(H_t,y_{w,1},\ldots,y_{w,K})$;
            $i_t\gets\phi(h_{t,K})$
            \STATE Compute $\widehat{g}(H_t)$ and
            $\{\widehat{d}(H_t,y_{w,i})\}_{i=1}^{K}$
            from the sampled actions
            \STATE Add these targets and their weak-model
            representations to $\mathcal{D}$, tagged by task $u$
            \STATE $p_t\gets 0$
            \IF{$m\geq 2$}
                \STATE $\displaystyle
                p_t\gets\frac{
                    1+
                    \widehat{f}_d^{(m)}
                    \bigl(\mathbf{h}(H_t,y_{w,i_t})\bigr)
                    -
                    \widehat{f}_g^{(m)}
                    \bigl(\mathbf{h}(H_t)\bigr)
                }{2}$
            \ENDIF
            \STATE Draw $b_t\sim\operatorname{Bernoulli}(p_t)$
            \IF{$b_t=1$}
                \STATE Draw $y_s$ from the strong model given $H_t$
                \STATE $y^t\gets y_s$
            \ELSE
                \STATE $y^t\gets y_{w,i_t}$
            \ENDIF
            \STATE Execute $y^t$ and observe $o^t$
            \STATE $H_{t+1}\gets(H_t,y^t,o^t)$;
            $t\gets t+1$
        \ENDWHILE
    \ENDFOR
\ENDFOR
\STATE \textbf{return} $\mathcal{D}$
\end{algorithmic}
\end{algorithm}

\newpage
\section{Proof of Theorem~\ref{thm:optimal_routing}}
\label{app:optimal_routing}

We first give a more detailed statement of the result, including the
optimal sampling policy. The proof has a simple structure. For a fixed
number of weak samples, we condition on the sampled responses: the best
weak response is the one with the largest correctness probability, and
routing is optimal when the improvement offered by the strong model is
large enough. To optimize sampling, we introduce prices for weak-model
samples and strong-model queries. For each prompt, the optimal number of
weak samples then maximizes its expected value net of the sampling price.

For the sampling characterization, assume that at most $K<\infty$ weak
responses may be generated. Conditional on $P=p$, the weak responses are
i.i.d.\ and independent of $(Y,Y_s)$. Recall
\[
q_p(y):=\Pr(Y=y\mid P=p),
\qquad
t_p:=\Pr(Y_s=Y\mid P=p).
\]

\begin{theorem}[Full version of Theorem~\ref{thm:optimal_routing}]
\label{thm:optimal_routing_full}
Let $1\leq B_w\leq K$ and $0\leq B_s\leq1$.

\emph{(i) Fixed sampling.}
Fix any sampling policy $\sigma$ satisfying
$\mathbb E[\sigma(P)]\leq B_w$. For a realized history
\[
h_\sigma=(p,y_{w,1},\ldots,y_{w,\sigma(p)}),
\]
define
\begin{equation}
\label{eq:app_routing_gain}
\Delta(h_\sigma)
:=
t_p-\max_{i\leq\sigma(p)}q_p(y_{w,i}).
\end{equation}
Then an optimal selection and routing policy is
\begin{equation}
\label{eq:app_optimal_selection_routing}
\phi^*(h_\sigma)
\in
\operatorname*{arg\,max}_{i\leq\sigma(p)}
q_p(y_{w,i}),
\qquad
\pi^*(h_\sigma)
=
\begin{cases}
1, & \Delta(h_\sigma)>\tau,\\
0, & \Delta(h_\sigma)<\tau,\\
\rho(h_\sigma), & \Delta(h_\sigma)=\tau,
\end{cases}
\end{equation}
where $\tau\geq0$ and $\rho(h_\sigma)\in[0,1]$ can be chosen so that
\begin{equation}
\label{eq:app_routing_budget}
\mathbb E[\pi^*(H_\sigma)]\leq B_s,
\qquad
\tau\bigl(\mathbb E[\pi^*(H_\sigma)]-B_s\bigr)=0.
\end{equation}

\emph{(ii) Optimal sampling.}
For $n\in\{1,\ldots,K\}$, define
\[
M_{p,n}:=\max_{i\leq n}q_p(Y_{w,i})
\]
and
\begin{equation}
\label{eq:app_sampling_value}
U_p(n;\tau)
:=
\mathbb E\!\left[
\max\{M_{p,n},\,t_p-\tau\}
\,\middle|\,P=p
\right].
\end{equation}
Then, for some global weak-sampling price $\beta\geq0$, an optimal
sampling policy satisfies
\begin{equation}
\label{eq:app_optimal_sampling}
\sigma^*(p)
\in
\operatorname*{arg\,max}_{1\leq n\leq K}
\left\{
U_p(n;\tau)-\beta n
\right\}.
\end{equation}
Together with the selection and routing rules in
\eqref{eq:app_optimal_selection_routing}, the prices can be chosen so that
\begin{equation}
\label{eq:app_complementarity}
\mathbb E[\sigma^*(P)]\leq B_w,
\qquad
\beta\bigl(\mathbb E[\sigma^*(P)]-B_w\bigr)=0,
\end{equation}
and~\eqref{eq:app_routing_budget} holds.

If needed at a budget boundary, the policy may randomize between tied
maximizers in~\eqref{eq:app_optimal_sampling} or at
$\Delta(h_\sigma)=\tau$.
\end{theorem}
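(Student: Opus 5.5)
We argue by conditioning followed by Lagrangian duality. Part (i) comes first. Fix $\sigma$ and condition on the realized history $H_\sigma=h_\sigma$. By the conditional independence assumption, given $P=p$ the weak samples are independent of $(Y,Y_s)$. So for any $\phi,\pi$ measurable in $h_\sigma$, the tower property gives
\[
\mathbb E\bigl[(1-\pi)\mathbf 1\{Y_{w,\phi}=Y\}+\pi\mathbf 1\{Y_s=Y\}\,\big|\,H_\sigma=h_\sigma\bigr]
=(1-\pi(h_\sigma))\,q_p(y_{w,\phi(h_\sigma)})+\pi(h_\sigma)\,t_p .
\]
Selection does not enter the budget constraints, and the coefficient $1-\pi(h_\sigma)$ is nonnegative. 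Hence pointwise maximization over $\phi$ gives $\phi^*\in\arg\max_i q_p(y_{w,i})$ for every $\pi$. Substituting this choice, the objective becomes
\[
\mathbb E\Bigl[\max_i q_p(Y_{w,i})+\pi(H_\sigma)\Delta(H_\sigma)\Bigr].
\]
What remains is a fractional-knapsack (Neyman--Pearson type) problem: maximize $\mathbb E[\pi\Delta]$ subject to $\mathbb E[\pi]\le B_s$ and $\pi\in[0,1]$.

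To solve it, choose $\tau\ge0$ as follows. Let $\tau$ be the smallest nonnegative value with $\Pr(\Delta>\tau)\le B_s$. If $\tau=0$, we are in the slack case: route whenever $\Delta>0$, and set $\rho=0$ on ties. Otherwise pick $\rho$ on the atom $\{\Delta=\tau\}$ so that $\mathbb E[\pi^*]=B_s$. Such a $\rho$ exists because $\Pr(\Delta\ge\tau)\ge B_s$ by right-continuity of the distribution function. This gives \eqref{eq:app_routing_budget}. Optimality follows from the standard exchange inequality: for any feasible $\pi$,
\[
\mathbb E[(\pi^*-\pi)(\Delta-\tau)]\ge0,
\]
since $\pi^*-\pi\ge0$ where $\Delta>\tau$ and $\pi^*-\pi\le0$ where $\Delta<\tau$. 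Combined with $\tau\,\mathbb E[\pi^*-\pi]\ge0$, which uses complementary slackness and $\mathbb E[\pi]\le B_s$, this yields $\mathbb E[\pi^*\Delta]\ge\mathbb E[\pi\Delta]$.

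Part (ii) uses the same idea, now with two multipliers $(\beta,\tau)$. We form the Lagrangian
\[
L=\mathbb E[\text{accuracy}]-\tau(\mathbb E[\pi]-B_s)-\beta(\mathbb E[\sigma(P)]-B_w).
\]
For fixed prices, $L$ decouples across prompts. Given $p$ and $n=\sigma(p)$, the inner maximization over $\phi,\pi$ yields, pointwise in $h_\sigma$,
\[
\max\{M_{p,n},\,t_p-\tau\}.
\]
Taking the conditional expectation over the i.i.d.\ weak samples gives $U_p(n;\tau)$. Hence the per-prompt maximizer of the Lagrangian is $\sigma^*(p)\in\arg\max_n\{U_p(n;\tau)-\beta n\}$, together with the rules of part (i).

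Weak duality then does the rest. If some $(\beta,\tau)\ge0$ and a (possibly randomized) Lagrangian maximizer satisfy both budgets and both complementary-slackness conditions, then the maximizer is primal optimal. For any feasible policy $(\sigma,\phi,\pi)$, we have accuracy $\le L\le L(\text{ours})=$ our accuracy.

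The main obstacle is existence of such prices. Here the two constraints interact: changing $\tau$ shifts $U_p$ and hence $\sigma^*$, and changing $\beta$ shifts the distribution of $\Delta$. We plan to handle this by convexification. Allow randomized $\sigma$, i.e.\ distributions over $\{1,\dots,K\}$ per prompt, together with randomized $\pi$; this is permitted by the tie-breaking clause. The problem is then a linear program in the occupation measure. It is feasible, since $\sigma\equiv1$ and $\pi\equiv0$ satisfy $B_w\ge1$ and $B_s\ge0$, and the objective is bounded. Slater's condition requires strict feasibility, which holds when $B_w>1$ and $B_s>0$; the boundary cases $B_w=1$ or $B_s=0$ force the corresponding policy and can be handled directly by taking a large price. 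Under these conditions strong duality holds, so a dual optimum $(\beta^*,\tau^*)\ge0$ exists. Any primal optimum must maximize the Lagrangian at these prices and satisfy complementary slackness. This yields \eqref{eq:app_optimal_sampling}, \eqref{eq:app_complementarity} and \eqref{eq:app_routing_budget}, with randomization needed only on tie sets. Finally, part (i) can be read off by fixing $\sigma$ and applying the same argument with only the $\tau$ multiplier, which is the simpler knapsack argument above.
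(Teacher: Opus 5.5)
Your proposal is correct and follows essentially the same route as the paper. It conditions on $H_\sigma$ to get argmax selection and the reduction to $\mathbb E[\pi\Delta]$, uses the exchange inequality $(\pi^*-\pi)(\Delta-\tau)\ge 0$ for the threshold rule, decouples the priced objective per prompt into $U_p(n;\tau)-\beta n$, and finishes with a weak-duality verification once prices meeting the budgets with complementary slackness are available. The only variation is how those prices are produced: you use Slater's condition and strong duality on the occupation-measure LP (with large prices when $B_w=1$ or $B_s=0$), while the paper uses convexity and compactness of the achievable $(W,S,A)$ set plus a supporting hyperplane. Note that your step ``any primal optimum must maximize the Lagrangian'' needs a primal optimum to exist, which is exactly what that compactness provides; in turn, your Slater argument supplies the non-vertical hyperplane the paper leaves implicit.
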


\begin{proof}
We prove the result in a few simple steps.

\emph{Claim 1. For fixed sampling, the selection rule in
\eqref{eq:app_optimal_selection_routing} is optimal.}

Condition on a realized history
$h_\sigma=(p,y_{w,1},\ldots,y_{w,\sigma(p)})$. Because the weak-model
samples are independent of $(Y,Y_s)$ conditional on $P$, the conditional
correctness probabilities are
\[
\Pr(Y_{w,i}=Y\mid H_\sigma=h_\sigma)
=
q_p(y_{w,i}),
\qquad
\Pr(Y_s=Y\mid H_\sigma=h_\sigma)
=
t_p.
\]
Hence, for any selection and routing decisions, the conditional expected
accuracy is
\[
(1-\pi(h_\sigma))
q_p(y_{w,\phi(h_\sigma)})
+
\pi(h_\sigma)t_p.
\]
For any fixed routing decision, this is maximized by selecting a weak
response with the largest value of $q_p(y_{w,i})$. Therefore
\[
\phi^*(h_\sigma)
\in
\operatorname*{arg\,max}_{i\leq\sigma(p)}
q_p(y_{w,i}).
\]

With this choice, the conditional expected accuracy becomes
\[
\max_{i\leq\sigma(p)}q_p(y_{w,i})
+
\pi(h_\sigma)\Delta(h_\sigma).
\]
Thus, after optimizing selection, routing only needs to decide where to
spend the strong-model budget based on the gain $\Delta(h_\sigma)$.

\emph{Claim 2. The optimal routing policy is a threshold rule on
$\Delta(h_\sigma)$.}

For fixed $\sigma$, maximizing accuracy is equivalent to
\[
\max_{\pi}
\mathbb E[
\pi(H_\sigma)\Delta(H_\sigma)
]
\qquad
\text{subject to}
\qquad
\mathbb E[\pi(H_\sigma)]\leq B_s.
\]
Therefore the budget should be allocated to histories with the largest
values of $\Delta$.

More formally, choose $\tau\geq0$ and, if necessary, randomize on
$\{\Delta=\tau\}$ so that~\eqref{eq:app_routing_budget} holds. For any
other feasible routing policy $\pi$,
\[
(\pi^*-\pi)(\Delta-\tau)\geq0
\]
pointwise. Hence
\begin{align*}
\mathbb E[(\pi^*-\pi)\Delta]
&=
\mathbb E[(\pi^*-\pi)(\Delta-\tau)]
+
\tau\bigl(
\mathbb E[\pi^*]-\mathbb E[\pi]
\bigr)
\\
&\geq0,
\end{align*}
where the second term is nonnegative by feasibility and complementary
slackness. Thus $\pi^*$ is optimal, proving part~(i).

\emph{Claim 3. For fixed prices $(\beta,\tau)$, the optimal number of
weak samples satisfies~\eqref{eq:app_optimal_sampling}.}

Introduce a price $\beta$ for each weak-model sample and a price $\tau$
for each strong-model query. Consider a prompt $p$ for which we choose
$n$ weak samples.

After the samples are observed, Claim~1 shows that the best weak-model
accuracy is
\[
M_{p,n}
=
\max_{i\leq n}q_p(Y_{w,i}).
\]
If we return the weak response, the priced value is $M_{p,n}$. If we
query the strong model, the priced value is $t_p-\tau$. Therefore the
optimal terminal value is
\[
\max\{M_{p,n},t_p-\tau\}.
\]
Before the weak responses are sampled, its expected value is exactly
$U_p(n;\tau)$ from~\eqref{eq:app_sampling_value}. After subtracting the
sampling cost $\beta n$, the value of choosing $n$ samples is
\[
U_p(n;\tau)-\beta n.
\]
Since $\sigma$ depends only on the prompt, the optimal choice can be made
separately for each $p$, giving
\[
\sigma^*(p)
\in
\operatorname*{arg\,max}_{1\leq n\leq K}
\{U_p(n;\tau)-\beta n\}.
\]

The sampling rule also has a useful marginal interpretation. Define
\[
G_p(n;\tau)
:=
U_p(n+1;\tau)-U_p(n;\tau),
\qquad n<K.
\]
These marginal gains are nonincreasing in $n$. To see this, write
\[
Z_i:=q_p(Y_{w,i}),
\qquad
R_n:=\max\{t_p-\tau,Z_1,\ldots,Z_n\}.
\]
Then
\[
G_p(n;\tau)
=
\mathbb E[(Z_{n+1}-R_n)_+\mid P=p].
\]
Since $R_{n+1}\geq R_n$ and a fresh weak response has the same
distribution at every step, the expected improvement from one additional
sample cannot increase with $n$. Thus~\eqref{eq:app_optimal_sampling} can
equivalently be viewed as increasing the planned number of weak samples
while the expected marginal gain exceeds $\beta$, and stopping once it
falls below $\beta$.

\emph{Claim 4. Suitable prices recover the constrained optimum.}

Let
\[
A(\sigma,\phi,\pi)
\]
denote expected accuracy, and define
\[
W(\sigma):=\mathbb E[\sigma(P)],
\qquad
S(\sigma,\pi):=\mathbb E[\pi(H_\sigma)].
\]
For prices $\beta,\tau\geq0$, consider the priced objective
\[
A(\sigma,\phi,\pi)
-
\beta W(\sigma)
-
\tau S(\sigma,\pi).
\]
Claims~1--3 characterize a policy that maximizes this objective.

Under the standard randomized-policy formulation with finite $K$, the
set of achievable triples $(W,S,A)$ is convex and compact. Therefore a
supporting-hyperplane argument gives prices $\beta,\tau\geq0$ and
boundary randomization, when necessary, satisfying
\eqref{eq:app_routing_budget} and~\eqref{eq:app_complementarity}.

It remains only to verify that these conditions imply constrained
optimality. Let $(\sigma^*,\phi^*,\pi^*)$ be the resulting policy, and
let $(\sigma,\phi,\pi)$ be any other feasible policy. Since the former
maximizes the priced objective,
\begin{align*}
A(\sigma,\phi,\pi)
&\leq
A(\sigma^*,\phi^*,\pi^*)
-\beta W(\sigma^*)
-\tau S(\sigma^*,\pi^*)
\\
&\qquad
+\beta W(\sigma)
+\tau S(\sigma,\pi)
\\
&\leq
A(\sigma^*,\phi^*,\pi^*)
+\beta(B_w-W(\sigma^*))
+\tau(B_s-S(\sigma^*,\pi^*))
\\
&=
A(\sigma^*,\phi^*,\pi^*),
\end{align*}
where the last equality follows from complementary slackness. Hence the
stated sampling, selection, and routing policies are jointly optimal.
\end{proof}
\end{document}